\newcommand{\maybetodo}[1]{{#1}}
\newcommand{\charles}[1]{\maybetodo{\color{Plum} [Charles: #1]}}
\newcommand{\modelshort}{ALOE}
\newcommand{\defeq}{:=}
\newcommand{\M}[1]{\text{\tiny #1}}
\title{Learning Discrete Energy-based Models via \\Auxiliary-variable Local Exploration}
\author{%
  Hanjun Dai, Rishabh Singh, Bo Dai, Charles Sutton, Dale Schuurmans\\
  Google Research, Brain Team\\
  \texttt{\{hadai, rising, bodai, charlessutton, schuurmans\}@google.com}\\
}
\begin{document}

\maketitle
\vspace{-3mm}
\begin{abstract}
Discrete structures play an important role in applications
like program language modeling and software engineering.
Current approaches to predicting complex structures typically consider
autoregressive models for their tractability,
with some sacrifice in flexibility.  
Energy-based models (EBMs) on the other hand offer a more flexible
and thus more powerful approach to modeling such distributions,
but require partition function estimation.
In this paper we propose \modelshort, a new algorithm for learning conditional
and unconditional EBMs for discrete structured data,
where parameter gradients are estimated using a learned sampler
that mimics local search.
We show that the energy function and sampler can be trained efficiently
via a new variational form of power iteration,
achieving a better trade-off between flexibility and tractability. 
Experimentally, we show that learning local search leads to significant
improvements in challenging application domains.
Most notably, we present an energy model guided fuzzer for software testing
that achieves comparable performance 
to well engineered fuzzing engines like libfuzzer. 
\end{abstract}

\setlength{\abovedisplayskip}{2pt}
\setlength{\abovedisplayshortskip}{2pt}
\setlength{\belowdisplayskip}{2pt}
\setlength{\belowdisplayshortskip}{2pt}
\setlength{\jot}{2pt}

\setlength{\floatsep}{2ex}
\setlength{\textfloatsep}{2ex}


\section{Introduction}

Many real-world applications involve prediction of discrete structured
data,
such as syntax trees for natural language processing~\citep{tai2015improved,Zhang2016-nk},
sequences of source code tokens for program synthesis~\citep{devlin2017robustfill},
and structured test inputs for software testing~\citep{godefroid2017learn}.
A common approach for modeling a distribution over structured data
is the autoregressive model.
Although any distribution can be factorized in such a way,
the parameter sharing used in neural autoregressive models 
can restrict their flexibility.
Intuitively, a standard way to perform inference with autoregressive models has a single pass with a
predetermined order,
which forces commitment to early decisions that cannot
subsequently be rectified.
Energy-based models~\citep{lecun2006tutorial} (EBMs), on the other hand,
define the distribution with an \emph{unnormalized} energy function,
which allows greater flexibility by not committing to \emph{any}
inference order.
In principle,
this allows more flexible model parameterizations such as bi-directional LSTMs,
tree LSTMs~\citep{tai2015improved,Zhang2016-nk},
and graph neural networks~\citep{scarselli2008graph, kipf2016semi}
to be used to capture non-local dependencies.

Unfortunately, the flexibility of EBMs 
exacerbates the difficulties of learning and inference,
since the partition function is typically intractable.
EBM learning algorithms therefore employ approximate strategies
such as contrastive learning,
where positive samples are drawn from data
and negative samples obtained from an alternative sampler
\citep{dai2019exponential}.
Contrastive divergence~\citep{hinton2002training,tieleman2008training,wu2018sparse,du2019implicit},
pseudo-likelihood~\citep{besag1975statistical} and
score matching~\citep{hyvarinen2005estimation}
are all examples of such a strategy.
However, 
such approaches use hand-designed negative samplers,
which can be overly restrictive in practice,
thus~\citep{dai2019exponential,dai2018kernel,arbel2020kale}
consider joint training of a flexible negative sampler 
along
with the energy function,
achieving significant improvements in model quality.
These recent techniques are not directly applicable to discrete structured data
however, since they exploit gradients over the data space.
In addition,
the parameter gradient involves an intractable sum,
which also poses a well-known challenge for stochastic estimation~\citep{williams1992simple,glynn1990likelihood,bengio2013estimating,jang2016categorical,maddison2016concrete,tucker2017rebar,tucker2018mirage,yin2019arsm}.


In this work, we propose 
\emph{Auxiliary-variable LOcal Exploration (\modelshort)},
a new method for discrete EBM training with a learned negative sampler.
Inspired by viewing MCMC as a local search in continuous space,
we parameterize the learned sampler using local discrete search;
that is, the sampler first generates an initial negative structure using
a tractable model, such as an autoregressive model,
then repeatedly makes local changes to the structure.
This provides a learnable negative sampler that still depends globally
on the sequence.  
As there are no demonstrations for intermediate steps in the local search,
we treat it as an auxiliary variable model.
To learn this negative sampler, instead of the 
primal-dual form of MLE \cite{Wainwright2008-uc,dai2019exponential}, 
we propose a new variational objective that uses \emph{finite-step}
MCMC sampling for the gradient estimator,
resulting in an efficient method. 
The procedure alternates between updating the energy function
and improving the dual sampler by power iteration, which
can be understood as generalization of persistent
contrastive divergence (PCD~\citep{tieleman2008training}).



We experimentally evaluated the approach on both synthetic and real-world tasks. For a program synthesis problem, we observe significant accuracy improvements over the baseline methods. More notably, for a software testing task, a fuzz test guided by an EBM achieves comparable performance to a well-engineered fuzzing engine on several open source software projects.

\vspace{-1mm}
\section{Preliminaries}
\label{sec:model}
\vspace{-1mm}
{\bf Energy-based Models: } 
Let $x \in \mathcal{S}$ be a discrete structured datum in the space
$\mathcal{S}$.
We are interested in learning an energy function
$f: \mathcal{S} \rightarrow \RR$
that characterizes the distribution on $\mathcal{S}$.
Depending on the space, 
$f$ can be realized as an LSTM~\citep{hochreiter1997long}
for sequence data, a tree LSTM~\citep{tai2015improved}
for tree structures,
or a graph neural network~\citep{scarselli2008graph} for graphs.
The probability density function is defined as  
\begin{equation}
    p_f(x) = \exp\rbr{f(x) - \log Z_f} \propto \exp\rbr{f(x)},
\end{equation}
where $Z_f \defeq \sum_{x \in \mathcal{S}} \exp\rbr{f(x)}$ is the partition function.

It is natural to extend the above model for conditional distributions.
Let $z \in \mathcal{Z}$ be an arbitrary datum in the space $\mathcal{Z}$.
Then a conditional model is given by the density
\begin{equation}
	p_f(x|z) = \frac{\exp\rbr{f(x, z)}}{Z_{f, z}}, \text{ where } Z_{f, z} = \sum_{x \in \mathcal{S}} \exp\rbr{f(x, z)}.
\end{equation}
Typically $\mathcal{S}$ is a combinatorial set, which makes the partition
function $Z_f$ or $Z_{f, z}$ intractable to calculate.
This makes both learning and inference difficult. 

{\bf Primal-Dual view of MLE: } 
Let $\Dcal = \cbr{x_i}_{i=1}^{|\Dcal|}$ be a sample obtained from some unknown
distribution over $\mathcal{S}$.
We consider maximizing
the log likelihood of $\Dcal$ under model $p_f$:
\begin{equation}\label{eq:mle}
	\max_f \,\, \ell\rbr{f}\defeq \EE_{x \sim \Dcal}\sbr{f(x)} - \log Z_{f}.
\end{equation}
Directly maximizing this objective is not feasible due to the intractable
log partition term.
Previous work \citep{dai2018kernel,dai2019exponential} reformulates the MLE
by exploiting the Fenchel duality of the $\log$-partition function,
\ie, $\log Z_f = \max_q \EE_{x \sim q}\sbr{f(x)} - H(q)$, where $H(q) = -\EE_q \sbr{\log q}$ is the entropy of $q\rbr{\cdot}$, which leads to a primal-dual view of the MLE:
\begin{equation}
    \max_f \min_q\,\, \bar\ell\rbr{f, q}\defeq \underbrace{\EE_{x \sim \Dcal}\sbr{f(x)}}_{\text{positive sampling}} - \underbrace{\EE_{x \sim q}\sbr{f(x)}}_{\text{negative sampling}} - H(q)
    \label{eq:minimax}
\end{equation}
Although the primal-dual view introduces an extra dual
distribution 
$q\rbr{x}$ for negative sampling,
this provides an
opportunity to 
use a
trainable
deep neural network 
to capture the intrinsic data manifold,
which can lead
to a better negative sampler.
In \citep{dai2019exponential},
a family of flexible 
negative samplers
was introduced,
which combines learnable components with
dynamics-based MCMC samplers, \eg, Hamiltonian Monte Carlo (HMC)
\cite{neal2011mcmc} and
stochastic gradient Langevin dynamics (SGLD) \cite{welling2011bayesian},
to obtain significant practical improvements in continuous data modeling. 
%
However, the success of this approach
relied on the differentiability of $q$ and $f$ over a continuous domain,
requiring guidance not only from $\nabla_x f\rbr{x}$, but also from
gradient back-propagation through samples, \ie, $\nabla_\phi \bar\ell\rbr{f, q} = -\nabla_\phi\EE_{x\sim q_\phi}\sbr{\nabla_x f\rbr{x}\nabla_\phi x}$ where $\phi$ denotes the parameters of the dual distribution.
Unfortunately, for discrete data,
learning a
dual distribution for negative sampling 
is difficult.
Therefore  this approach is not directly translatable to discrete EBMs. 



\vspace{-1mm}
\section{Auxiliary-variable Local Exploration}
\label{sec:learning}
\vspace{-1mm}
To extend the above approach to discrete domains,
we first introduce a variational form of power iteration
(Section~\ref{sec:variational_pi})
combined with local search (Section~\ref{sec:learn_local_search}).
We present the method for an unconditional EBM,
but the extension to a conditional EBM is straightforward.



\subsection{MLE via Variational Gradient Approximation}
\label{sec:variational_pi}

For discrete data, learning the dual sampler in the min-max form of 
MLE~\eqref{eq:minimax} is notoriously difficult, usually leading to inefficient gradient 
estimation~\citep{williams1992simple,glynn1990likelihood,bengio2013estimating,jang2016categorical,maddison2016concrete,tucker2017rebar,tucker2018mirage,yin2019arsm}.
Instead
we consider an alternative optimization 
that has the same solution but is computationally preferable:
\begin{align}\label{eq:stat_constraint}
\max_{f, q} \,\,  \tilde\ell\rbr{f, q}\defeq \max_f \max_{q \in \Kcal} \: {\EE_{x\sim\Dcal}\sbr{f\rbr{x}} - \EE_{x\sim q}\sbr{f\rbr{x}}}, \\
\Kcal\defeq \cbr{q \:\middle|\: \int q\rbr{x}k_f\rbr{x'|x}dx  = q\rbr{x'}, \forall x'\in \Scal},
\end{align}
where $k_f\rbr{x'|x}$ is any ergodic MCMC kernel whose stationary distribution is $p_f$. 
\begin{theorem}\label{thm:equivalent}
Let $\rbr{f^*, q^*} = \argmax_{f, q} \tilde\ell\rbr{f, q}$.
If the kernel $k_f\rbr{x'|x}$ is ergodic with stationary distribution $p_f$,
then $f^* = \argmax \ell\rbr{f}$ is the MLE and $q^* = p_{f^*}$. 
\end{theorem}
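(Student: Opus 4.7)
The plan is to exploit the ergodicity assumption to collapse the constraint set $\Kcal$ to a singleton, and then match the resulting first-order condition on $f$ to that of the MLE $\ell\rbr{f}$.

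First I would observe that for any fixed $f$, membership in $\Kcal$ requires $q$ to be invariant under $k_f$. Ergodicity guarantees that $k_f$ admits a unique invariant distribution, namely $p_f$, so $\Kcal = \cbr{p_f}$ for every $f$. Hence at any feasible pair $\rbr{f^*, q^*}$ we automatically have $q^* = p_{f^*}$, which already gives the second conclusion of the theorem.

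Next I would read off the first-order condition in $f$ from the joint optimality and match it to MLE. Viewing the joint maximization as the alternating scheme described in Section~\ref{sec:variational_pi}, in which the $f$-step holds the current $q$ fixed and only the $q$-step enforces $q \in \Kcal$, stationarity of $\tilde\ell$ at $f^*$ yields $\EE_{x\sim\Dcal}\sbr{\nabla_\theta f^*\rbr{x}} = \EE_{x\sim q^*}\sbr{\nabla_\theta f^*\rbr{x}}$. Plugging in $q^* = p_{f^*}$ from the previous step produces $\EE_{x\sim\Dcal}\sbr{\nabla_\theta f^*\rbr{x}} = \EE_{x\sim p_{f^*}}\sbr{\nabla_\theta f^*\rbr{x}}$, which is exactly $\nabla_\theta \ell\rbr{f^*} = 0$. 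Under standard identifiability (e.g.\ concavity of $\ell$ in the natural parameters of a log-linear $f$), this stationary point is the unique MLE.

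The delicate point, which I expect to be the main obstacle, is how the joint $\arg\max$ is interpreted under the $f$-dependent constraint $q \in \Kcal$. Naively substituting $q = p_f$ and taking a total derivative of $\tilde\ell\rbr{f, p_f}$ in $f$ would produce an extra $\mathrm{Cov}_{p_f}\rbr{f, \nabla_\theta f}$ term that does not vanish at the MLE, so the two viewpoints do not agree. The resolution I would spell out is that $\Kcal$ functions as a fixed-point constraint rather than as a substituted identity: the $q$-block projects onto the stationary distribution of $k_f$ while the $f$-block uses the partial gradient with $q$ held fixed, so the two first-order conditions superimpose to yield the MLE moment-matching equation. This matches the PCD-style training the paper actually implements, and once this interpretation is pinned down the remainder of the argument is a one-line substitution.
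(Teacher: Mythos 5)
Your proof takes essentially the same route as the paper's: ergodicity collapses $\Kcal$ to the singleton $\cbr{p_f}$, which gives $q^* = p_{f^*}$ immediately, and the first-order condition in $f$ with $q$ held fixed reproduces the moment-matching equation $\EE_{x\sim\Dcal}\sbr{\nabla_\theta f(x)} = \EE_{x\sim p_{f^*}}\sbr{\nabla_\theta f(x)}$, i.e.\ $\nabla\ell(f^*)=0$. The ``delicate point'' you flag is genuine and is exactly what the paper's one-line proof glosses over: substituting the constraint gives $\tilde\ell(f,p_f) = \ell(f) + H(p_f)$, whose total derivative carries the extra term $-\mathrm{Cov}_{p_f}\rbr{f,\nabla_\theta f}$ (this is the $-H(q)$ term present in the primal--dual objective $\bar\ell$ of Eq.~\eqref{eq:minimax} but dropped from $\tilde\ell$), so the literal joint $\argmax$ need not coincide with the MLE; the theorem is correct only under the fixed-point, alternating-partial-gradient reading that you spell out and that the paper implicitly adopts when it ``substitutes into the gradient.'' Your resolution is the right one and is more careful than the paper's own argument.
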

\begin{proof}
By the ergodicity of $k_f\rbr{x'|x}$, there is unique feasible solution satisifying the constraint $\int q\rbr{x}k_f\rbr{x'|x}dx  = q\rbr{x'}$, which is $p_f\rbr{x}$.
Substituting this into the gradient of $\tilde{\ell}$ yields
$$
\EE_{x\sim\Dcal}\sbr{\nabla_f f\rbr{x}} - \EE_{x\sim q_f}\sbr{\nabla_f f\rbr{x}} = 0,
$$
verifying that $f$ is the optimizer of~\eqref{eq:mle}. 
\vspace{-2mm}
\end{proof} 


Solving the optimization~\eqref{eq:stat_constraint} is still nontrivial,
as the constraints are in the function space.
We therefore propose an alternating update 
based on the variational form~\eqref{eq:stat_constraint}:
\begin{itemize}[leftmargin=*,nolistsep,nosep]
	\item {\bf Update $q$ by power iteration:}
Noticing that the constraint actually seeks an eigenfunction of
$k_f\rbr{x'|x}$, we can apply power iteration to find the optimal $q$. 
Conceptually, this power iteration executes
$q_{t+1}(x') = \int q_t(x) k_f\rbr{x'|x}dx$ until convergence.
However, since the integral is intractable,
we instead apply a variational formulation to minimize 
	\begin{equation}
	q_{t+1} = \argmin_q D_{KL}\rbr{\int q_t(x) k_f\rbr{x'|x}dx\Big|\Big| q} = \argmin_q \EE_{q_t\rbr{x}k_f\rbr{x'|x}}\sbr{\log q\rbr{x'}}. 	\label{eq:variational_pi}
	\end{equation}
In practice, this only requires a few power iteration steps.
Also we do not need to worry about differentiability with respect to $x$,
as \eqref{eq:variational_pi} needs to be differentiated only with respect to
the parameters of $q.$ 
We will show in the next section that this framework actually allows a much
more flexible $q$ than autoregressive, such as a local search algorithm. 


	\item {\bf Update $f$ with MLE gradient:}
Denote $q_f^* = \argmax_{q\in \Kcal}\tilde\ell\rbr{f, q}$.
Then $q^*_f$ converges to $p_f$. 
Recall the unbiased gradient estimator for MLE $\ell\rbr{f}$ w.r.t. $f$ is
	\begin{eqnarray*}
	\nabla_f \ell\rbr{f} = \EE_{x\sim\Dcal}\sbr{\nabla_f f\rbr{x}} - \EE_{x\sim q^*_f}\sbr{\nabla_f f\rbr{x} },
	\end{eqnarray*}
\end{itemize}
By alternating these two updates,
we obtain the ALOE framework illustrated in Algorithm~\ref{alg:aloe}. 

\noindent\textbf{Connection to PCD:}
When we set the number of power iteration steps to be $1$,
the variational form of MLE optimization can be understood
as a function generalized version of
Persistent Contrastive Divergence (PCD)~\citep{tieleman2008training},
where we distill the past MCMC samples into the sampler
$q$~\citep{xie2018cooperative}. Intuitively, since $f$ is optimized by
gradient descent, the energy models between adjacent stochastic gradient
iterations should still be close, and the power iteration will converge
very fast. 
%
%

\noindent\textbf{Connection to wake-sleep algorithm:}
ALOE is also closely related to the ``wake-sleep''
 algorithm~\citep{hinton1995wake}
introduced for learning Helmholtz machines~\citep{dayan1995helmholtz}.
The ``sleep'' phase learns the recognition network with objective
$D_{KL}(p_f || q)$, requiring samples from the current model.
However it is hard to obtain such samples for general EBMs,
so we exploit power iteration in a variational form.



\subsection{Negative sampler as local search with auxiliary variables}
\label{sec:learn_local_search}

\begin{figure*}[t]
\begin{minipage}{0.49\textwidth}
\begin{algorithm}[H]
    \centering
    \caption{Main algorithm of ALOE}\label{alg:aloe}
    \begin{algorithmic}[1]
    \STATE Input: Observations $\Dcal = \cbr{x_i}_{i=1}^{|\Dcal|}$
    \STATE Initialize score function $f$, sampler $q$.
    \FOR{$x \sim \Dcal$}
    \STATE Sample $(\hat{x}, \tilde{x})$ from $q(\hat{x})k_f(\tilde{x}|\hat{x})$
    \STATE Update $f$ with $-\nabla_f f(x) +\nabla_f f(\tilde{x})$
    \STATE Update $q$ using \algref{alg:learn_q}
    \ENDFOR
    \end{algorithmic}
\end{algorithm}
\end{minipage}
\hfill
\begin{minipage}{0.49\textwidth}
\begin{algorithm}[H]
    \centering
    \caption{Update sampler $q$}\label{alg:learn_q}
    \begin{algorithmic}[1]
	\STATE Input: Current model $f$
	\FOR{$i\leftarrow 1 $ to \# power iteration steps}
	\STATE Sample $\tilde{x}$ from $q$, and get $x$ from $k_f(\cdot|\tilde{x})$.
	\STATE Sample trajectories $\textstyle \{\xb^j_{0:t^j}\}_{j=1}^N$ for $x$ using Eq~\eqref{eq:inv_proposal} or Eq~\eqref{eq:short_proposal}.
	\STATE Update $q$ with gradient from Eq~\eqref{eq:monte_carlo_grad}.
	\ENDFOR
    \end{algorithmic}
\end{algorithm}
\end{minipage}

\caption{
ALOE for learning unconditional discrete EBMs.
Algorithms are similar for conditional case.
We demonstrate with a single example,
but in practice batched optimization is used.
\label{fig:algo}
}
\end{figure*}

Ideally the sampler $q*$ should converge to the stationary distribution $p_f$,
which requires a sufficiently flexible distribution. 
One possible choice for a discrete structure sampler is an autoregressive
model, like RobustFill for generating program trees~\citep{devlin2017robustfill}
or GGNN for generating graphs~\citep{li2018learning}.
However, these have limited flexibility due to 
parameters being shared at each decision step,
which is
needed to handle variable sized structures.
Also the ``one-pass'' inference according to a predefined order makes the initial decisions too important in the entire sampling procedure. 

Intuitively, humans do not generate structures sequentially,
but perform successive refinement.
Recent approaches 
for continuous EBMs
have found that using HMC or SGLD provides more effective learning
~\citep{dai2019exponential, du2019implicit}
by exploiting gradient information.
For discrete data, 
an analogy to gradient based search is local search.
In discrete local search,
an initial solution can be obtained using a simple algorithm,
then local modification can be made to successively improve the structure.

By parameterizing $q$ as a local search algorithm,
we obtain a strictly more flexible sampler than the autoregressive counterpart.
Specifically, we first generate an initial sample $x_0 \sim q_0$,
where $q_0$ can be an autoregressive distribution with parameter sharing,
or even a fully factorized distribution.
Next we obtain a new sample using an editor $q_A(x_i | x_{i-1})$,
where $q_A(\cdot|\cdot): \mathcal{S} \times \mathcal{S} \mapsto \RR$
defines a transition probability.
We also maintain a stop policy $q_{\M{stop}}(\cdot): \mathcal{S} \mapsto [0, 1]$
that decides when to stop 
editing.
The overall local search procedure yields a chain of $\xb_{0:t} := \cbr{x_0, x_1, \ldots, x_t}$, with probability
\begin{equation}
	q(\xb_{0:t}; \phi) = q_0(x_0) \prod_{i=1}^t q_A(x_i|x_{i-1}) \prod_{i=0}^{t-1} (1-q_{\M{stop}}(x_{i})) q_{\M{stop}}(x_t)
	\label{eq:traj_q}
\end{equation}
where $\phi$ denotes the parameters in $q_0, q_A$ and $q_{\M{stop}}$. 
The marginal probability of a sample $x$ is:
\begin{equation}
	q(x;\phi) = \sum_{t, \xb_{0:t}: t \leq T} q(\xb_{0:t}; \phi) \II\sbr{x_t = x}, \text{ where $T$ is a maximum length},
	\label{eq:prob_q}
\end{equation}
which we then use as the variational distribution in Eq~\eqref{eq:variational_pi}.
The variational distribution $q$ can be viewed as a latent-variable model, where  $x_0, \ldots, x_{t-1}$ are the latent variables.
This choice is expressive, but it brings the difficulty of optimizing
\eqref{eq:variational_pi} due to the intractability of marginalization.
Fortunately, we have the following theorem for an unbiased gradient estimator:
\begin{theorem}
\vspace{-3mm}
 \citet{steinhardt2015learning}: the gradient with respect to parameters $\phi$ has the form
\begin{equation}
	\nabla_{\phi} \log q(x;\phi) = \EE_{q(\xb_{0:t} | x_t = x; \phi)} \sbr{ \nabla_{\phi} \log q([\xb_{0:t-1}, x];\phi) }  
	\label{eq:grad_q}
\end{equation}
where $q(\xb_{0:t} | x_t = x; \phi) \propto q(\xb_{0:t}; \phi) \II\sbr{x_t = x}$. 
\label{thm:grad_q}
\vspace{-3mm}
\end{theorem}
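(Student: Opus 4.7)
The plan is to apply the standard log-derivative (score function) identity to the marginalization in \eqref{eq:prob_q}, exploiting the fact that the indicator $\II[x_t = x]$ is independent of $\phi$. The key observation is that, after differentiation, the trajectory weights normalized by $q(x;\phi)$ are precisely the posterior $q(\xb_{0:t} \mid x_t = x; \phi)$, which turns the gradient into the claimed conditional expectation.

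Concretely, I would proceed in four steps. First, starting from $q(x;\phi) = \sum_{t,\xb_{0:t}: t\leq T} q(\xb_{0:t};\phi)\,\II[x_t = x]$, I would write
\[
\nabla_\phi \log q(x;\phi) = \frac{1}{q(x;\phi)}\sum_{t,\xb_{0:t}} \nabla_\phi q(\xb_{0:t};\phi)\,\II[x_t = x],
\]
pulling the gradient inside the sum and through the $\phi$-independent indicator. Second, I would apply the identity $\nabla_\phi q(\xb_{0:t};\phi) = q(\xb_{0:t};\phi)\,\nabla_\phi \log q(\xb_{0:t};\phi)$, which is legitimate because each factor in \eqref{eq:traj_q} is a probability of a discrete event, strictly positive on the support of the sum. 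Third, I would recognize the weights $q(\xb_{0:t};\phi)\II[x_t=x]/q(x;\phi)$ as exactly the normalized posterior $q(\xb_{0:t}\mid x_t=x;\phi)$ defined in the theorem statement, turning the sum into an expectation. Fourth, because the indicator forces $x_t = x$, I would substitute $\xb_{0:t} = [\xb_{0:t-1}, x]$ inside $\nabla_\phi \log q(\cdot;\phi)$ to obtain the stated form.

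The routine calculation is entirely mechanical; the only genuine subtlety is bookkeeping around the variable trajectory length. Because the sum in \eqref{eq:prob_q} ranges over all $t \leq T$, I need to treat $t$ as part of the latent variable: the posterior $q(\xb_{0:t}\mid x_t = x;\phi)$ is a distribution over (length, trajectory) pairs whose final state equals $x$, and the expectation on the right-hand side should be read with that understanding. I would also briefly note that the stop-policy factors $q_{\M{stop}}(x_t)$ and $1 - q_{\M{stop}}(x_i)$ in \eqref{eq:traj_q} depend on $\phi$ and contribute to $\nabla_\phi \log q([\xb_{0:t-1}, x];\phi)$, but no special care is required beyond ensuring they are strictly in $(0,1)$ so the log-derivative trick is valid.

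The main obstacle, if any, is making the argument rigorous when $T = \infty$ (i.e., when no finite maximum length is imposed). In that case one would need to justify interchanging $\nabla_\phi$ with the infinite sum, typically via dominated convergence and an integrability assumption on $\nabla_\phi \log q(\xb_{0:t};\phi)$ under $q$. Since the theorem is stated for finite $T$, this issue does not arise and the proof reduces to the finite calculation above.
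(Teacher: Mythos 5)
Your derivation is correct: the log-derivative identity applied to the marginal \eqref{eq:prob_q}, with the normalized weights $q(\xb_{0:t};\phi)\II[x_t=x]/q(x;\phi)$ recognized as the posterior over (length, trajectory) pairs, is exactly the standard argument, and your handling of the variable length $t$ as part of the latent variable is the right reading. The paper itself gives no proof (it defers to the cited Steinhardt--Liang result), so there is nothing to contrast with; your finite-$T$ calculation fully establishes the statement as used here.
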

In above equation,
$q(\xb_{0:t} | x_t = x; \phi)$ 
is the posterior distribution given the final state $x$ of the local search
trajectory,
which is hard to directly sample from.
The common strategy of optimizing the variational lower bound of likelihood
would require policy gradient~\citep{mnih2014neural}
and introduce extra samplers.
Instead,
inspired by~\citet{steinhardt2015learning},
we use importance sampling with self-normalization to estimate the gradient
in \eqref{eq:grad_q}.
Specifically, let $s_x(\xb_{0:t-1})$ be the proposal distribution of the
local search trajectory. 
We then have
\begin{equation}
	\nabla_{\phi} \log q(x;\phi) = \EE_{s_x(\xb_{0:t-1})} \sbr{ \frac{q(\xb_{0:t} | x_t = x; \phi)}{s_x(\xb_{0:t-1})} \nabla_{\phi} \log q([\xb_{0:t-1}, x]; \phi) }
\end{equation}
In practice, we draw $N$ trajectories from the proposal distribution,
and approximate the normalization constant in $q(\xb_{0:t} | x_t = x; \phi)$
via self-normalization. The Monte Carlo gradient estimator is:
\begin{eqnarray}
    \nabla_{\phi} \log q(x; \phi) & \simeq & \frac{1}{N} \sum_{j=1}^N \frac{q(\xb^j_{0:t^j}|x_{t^j}=x; \phi)}{s_x(\xb^j_{0:t^j-1})} \nabla_{\phi} \log q([\xb_{0:t^j-1}^j, x]; \phi) \nonumber \\
    & \simeq & \frac{1}{N} \sum_{j=1}^N \frac{q(\xb^j_{0:t^j}; \phi)}{s_x(\xb^j_{0:t^j-1}) \sum_{k=1}^N q(\xb^k_{0:t^k}; \phi) } \nabla_{\phi} \log q_{\phi}([\xb_{0:t^j-1}^j, x]; \phi)
    \label{eq:monte_carlo_grad}
\end{eqnarray}
The self-normalization trick above is also equivalent to re-using the same
proposal samples from $s_x(\xb_{0:t-1})$ to estimate the normalization term
in the posterior $q(\xb_{0:t} | x_t = x; \phi)$.
Then,
given a sample $x$, a good proposal distribution for trajectories
needs to guarantee that every proposal trajectory ends exactly at $x$.
Below we propose two designs for such proposal.   

\noindent\textbf{Inverse proposal:}
Instead of randomly sampling a trajectory and hoping it arrives exactly at $x$,
we can walk backwards from $x$, sampling $x_{t-1}, x_{t-2}, \ldots, x_0$.
We call this an inverse proposal.
In this case, we first sample a trajectory length $t$.
Then for each backward step, we sample $x_k \sim A'(x_k|x_{k+1})$.
For simplicity, we sample $t$ from a truncated geometric distribution,
and choose $A'(\cdot|\cdot)$ from the same distribution family as the forward
editor $q_A(\cdot|\cdot)$, except that $A'$ is not trained.
In this case we have 
\begin{equation}
	s_x(\xb_{0:t-1}) = \text{Geo}(t) \prod_{i=0}^{t-1} A'(x_i|x_{i+1}) 
\label{eq:inv_proposal}
\end{equation}
Empirically we have found that the learned local search sampler
will adapt to the energy model with a different expected number of edits,
even though the proposal is not learned.

\noindent\textbf{Edit distance proposal:}
In cases when we have a good $q_0$,
we design the proposal distribution based on shortest edit distance.
Specifically, we first sample $x_0 \sim q_0$.
Then, given $x_0$ and the target $x$, we sample the trajectory
$\xb_{1:t-1}$ that would transform $x_0$ to $x$
with the minimum number of edits.
For the space of discrete data $\Scal = \cbr{0, 1}^d$, the number of edits
equal the hamming distance between $x_0$ and $x$;
if $S$ corresponds to programs,
then this corresponds to the shortest edit distance.
Thus
\begin{equation}
	s_x(\xb_{0:t-1}) \propto q_0(x_0) \II\sbr{t = \text{ShortestEditDistance}(x_0, x)}
\label{eq:short_proposal}
\end{equation}
Note that such proposal only has support on shortest paths, which would give a biased gradient in learning the local search sampler. In practice, we found such proposal works well. If necessary, this bias can be removed: For learning the EBM, we care only about the distribution over end states, and we have the freedom to design the local search editor, so we could limit it to generate only shortest paths, and the edit distance proposal would give unbiased gradient estimator.

\noindent\textbf{Parameterization of $q_A$:}
We restrict the editor $q_A(\cdot|x_{i-1})$ to make local modifications,
since local search has empirically strong performance~\citep{chen2019learning}.
Also such transitions resemble Gibbs sampling, which introduces a good
inductive bias for optimizing the variational form of power iteration.
Two example parameterizations for the local editor are:
\begin{itemize}[leftmargin=*,nolistsep,nosep]
	\item If $x \in \cbr{0, 1, \ldots, K}^d$,
then $q_A(\cdot|x_{i-1}) = \text{Multi}(d)\times\text{Multi}(K)$,
where the first multinomial distribution decides a position to change,
and the second one chooses a value for that position. 
	\item If $x$ is a program, the editor chooses a statement in
 the program and replaces with a generated statement.
The statement selector follows a multinomial with arbitrary dimensionality
using the pointer mechanism~\citep{vinyals2015pointer},
while the statement generater can be an autoregressive tree generator. 
\end{itemize}

The learning algorithm for the sampler and the overall learning
framework is summarized in \figref{fig:algo}. Please also refer to our open sourced implementation for more details~\footnote{\url{\url{https://github.com/google-research/google-research/tree/master/aloe}}}.

\section{Related work}
\label{sec:related}

\noindent\textbf{Learning EBMs:} 
Significant progress has recently been made in 
learning \textit{continuous} EBMs~\citep{du2019implicit, yu2020training},
thanks to efficient MCMC algorithms with gradient
guidance~\citep{neal2011mcmc, welling2011bayesian}.
Interestingly,
by reformulating contrastive learning as a minimax
problem~\citep{kim2016deep, dai2019exponential, arbel2020kale},
in addition to the model~\citep{dai2018kernel} 
these methods also learn a sampler that can generate realistic 
data~\citep{nijkamp2019learning}.
However learning the sampler and 
gradient based MCMC
require the existence of the gradient with respect to data points,
which is unavailable for discrete data. 
These methods
can also be adapted to discrete data using policy gradient,
but might be unstable during optimization. 
Also for continuous data,
~\citet{xie2018cooperative} proposed an MCMC teaching framework that shares
a similar principle to our variational power method, when the number of power 
iterations is limited to 1 (\algref{alg:learn_q}).
Our work is different in that we propose a local search sampler and novel importance proposal 
that is more suitable for discrete spaces of structures.

For \textit{discrete} EBMs, classical methods like CD, PCD or wake-sleep
are applicable,
but with drawbacks (see~\secref{sec:variational_pi}).
Other recent work with discrete data includes
learning MRFs with a variational upper
bound~\citep{kuleshov2017neural},
using the Gumbel-Softmax trick~\citep{Li2020To},
or using a residual-energy 
model~\citep{bakhtin2020energybased, deng2020residual}
with a pretrained proposal for noise contrastive
estimation~\citep{gutmann2010noise},
but these are not necessarily suitable for general EBMs.
SPEN~\citep{belanger2016structured, belanger2017end}
proposes a continuous relaxation 
combined with a max-margin 
principle~\citep{taskar2004max},
which works well for structured prediction,
but could suffer from mode collapse.

\noindent\textbf{Learning to search:}
Our parameterization of the negative sampler with auxiliary-variable local
search is also related to work on learning to search.
Most work in that literature considers learning the search strategy 
given demonstrations~\citep{he2014learning,chang2015learning,song2018learning,guez2018learning}.
When no supervised trajectories are available, 
policy gradient with variance reduction is typically used to improve the
search policy,
 in domains like machine translation~\citep{xia2017deliberation}
and combinatorial optimization~\citep{chen2019learning}.
Our variational form for power iteration circumvents the need for REINFORCE,
and thereby gains significant stability in practice.

\noindent\textbf{Other discrete models:}
There are many other models for discrete data, like invertible flows for sequences~\citep{tran2019discrete, hoogeboom2019integer} or graphs~\citep{shi2020graphaf, madhawa2019graphnvp}. 
Recently there is also interest in learning non-autoregressive models for
NLP~\citep{stern2019insertion, gu2019levenshtein, ghazvininejad2019constant}.
The main focus of ALOE is to provide a new learning algorithm for EBMs.
Comparing
EBMs and other discrete models will be 
interesting for future investigation.

\section{Experiments}

\label{sec:experiment}


\subsection{Synthetic problems}
\label{sec:exp_synthetic}

We first focus
on learning unconditional discrete EBMs $p(x) \propto \exp f(x)$ 
from data with an unknown distribution, where 
the data consists of bit vectors
$x \in \cbr{0, 1}^{32}$.

\noindent\textbf{Baselines:}
We compare against a hand designed sampler and a learned sampler 
from the recent literature. 
The hand designed sampler baseline is PCD~\citep{tieleman2008training} 
using a replay buffer and random restart tricks~\citep{du2019implicit},
which has shown superior results in image generation.
The learned sampler baseline is the discrete version of 
ADE~\citep{dai2019exponential}. 
Please refer to \appref{app:exp_synthetic} for more details about the
baseline setup. 

\noindent\textbf{Experiment setup:}
This experiment is designed to allow
both a quantitative and 2D visual evaluation.
We first collect synthetic 2D data in a continuous space 
\citep{grathwohl2018ffjord},
where the 2D data $\hat{x} \in \RR^2$ is sampled from some unknown
distribution $\hat{p}$.
For a given $\hat{x}$, we convert the floating-point number representation
(with precision up to $1e^{-4}$) of each dimension into a 16-bit Gray 
code.\footnote{\url{https://en.wikipedia.org/wiki/Gray\_code}}
This means the unknown true distribution in discrete space is
$p(x) = \hat{p}( [\texttt{GrayToFloat}(x_{0:15})/1e^4, \texttt{GrayToFloat}(x_{16:31})/1e^4])$.
This task is challenging even in the original 2D space,
compounded by the nonlinear Gray code.
%
All the methods learn the same score function $f$,
which is parameterized by a 4-layer MLP with ELU~\citep{clevert2015fast}
activations and hidden layer size$=256$.
ADE and ALOE learns the same form of $q_0$. Since the dimension is fixed to 32, $q_0$ is an autoregressive model with no parameter sharing across 32 steps. For ALOE we also use Gibbs sampling as the base MCMC sampler, but we only perform one pass over 32 dimensions, which is only $1/10$ of what PCD used.

\begin{figure*}
\centering
\setlength{\tabcolsep}{0.1em}
\begin{tabular}{ccccccc}
	\includegraphics[width=0.14\textwidth]{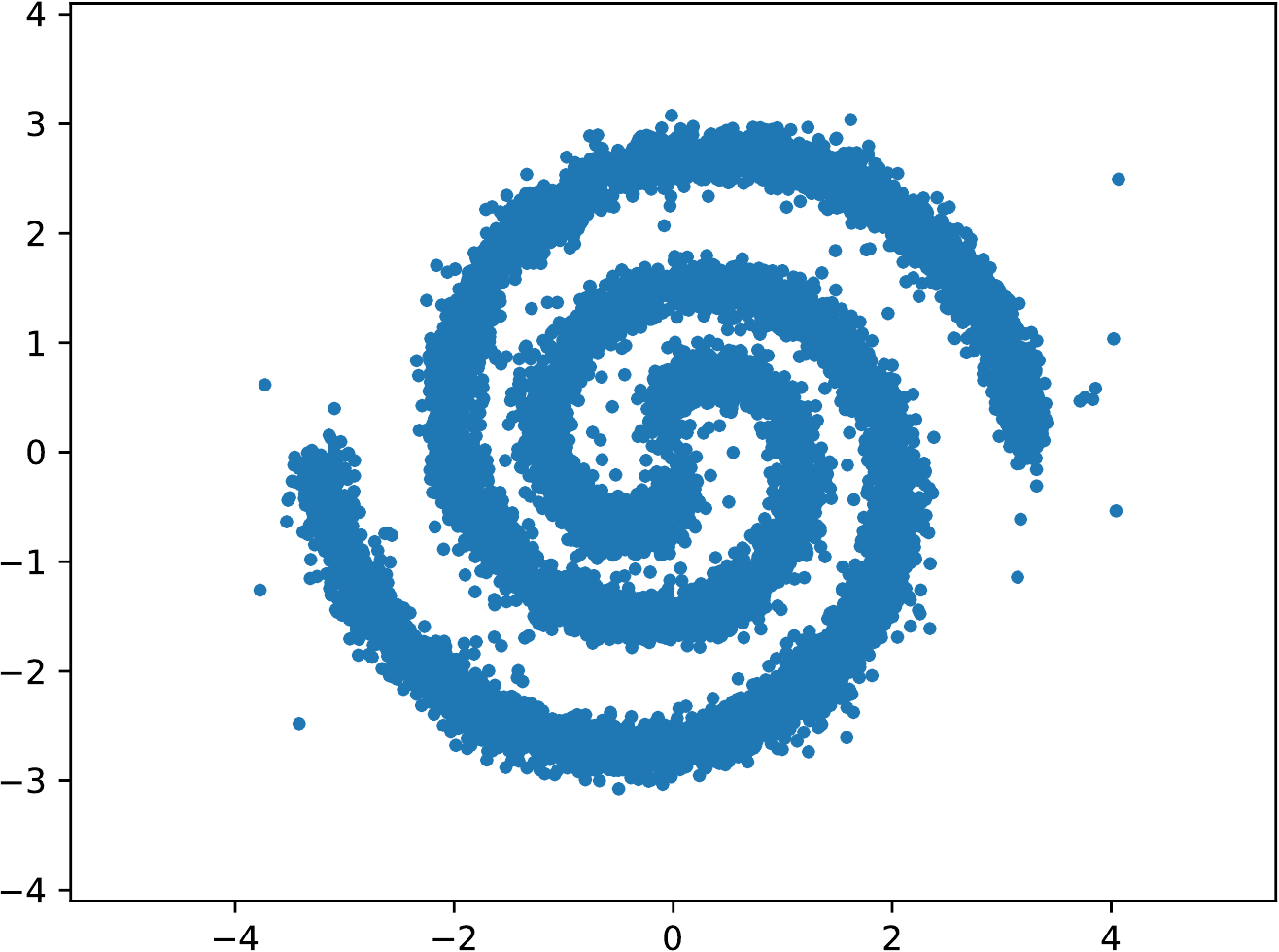} &
	\includegraphics[width=0.14\textwidth]{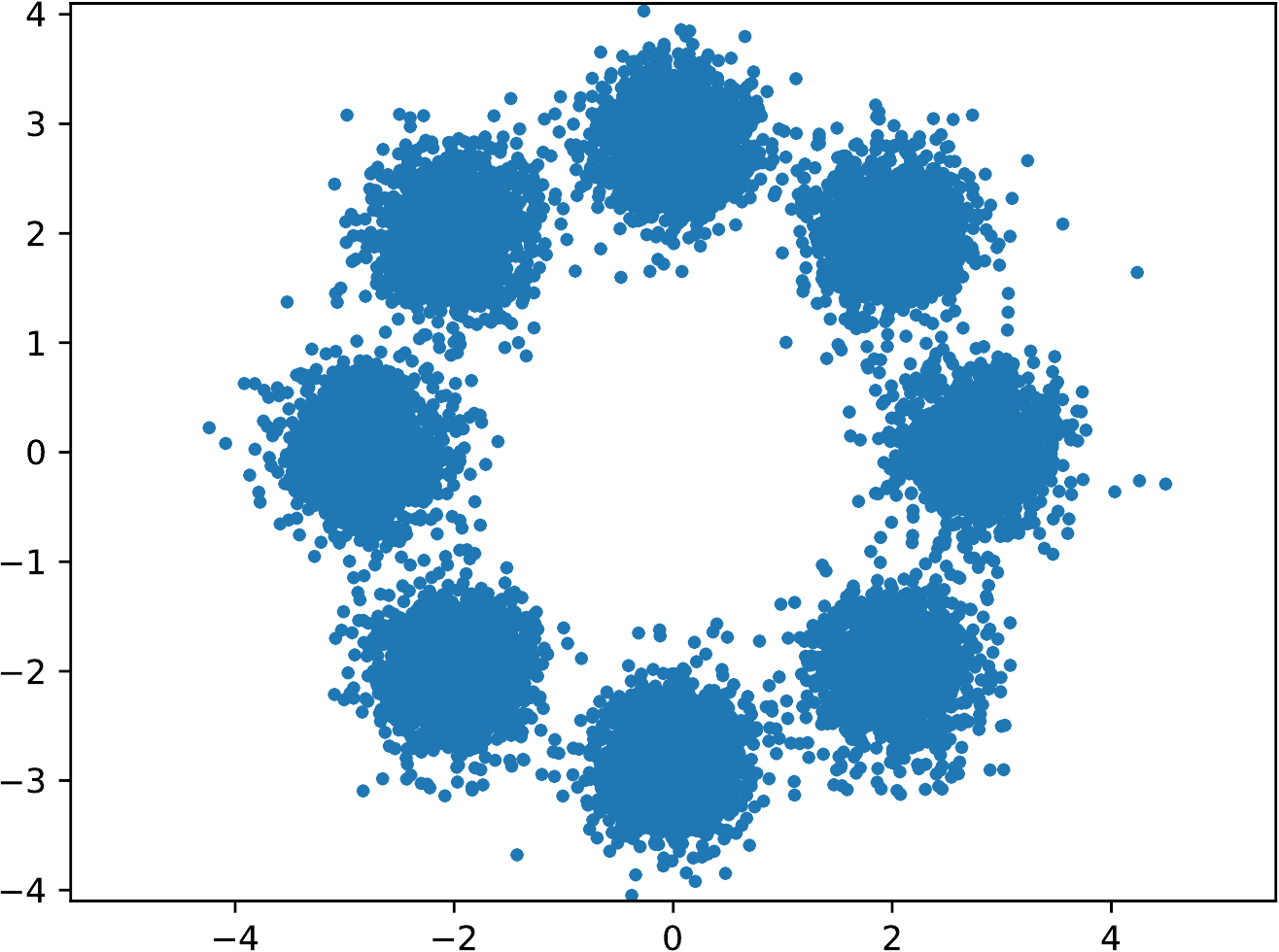} &
	\includegraphics[width=0.14\textwidth]{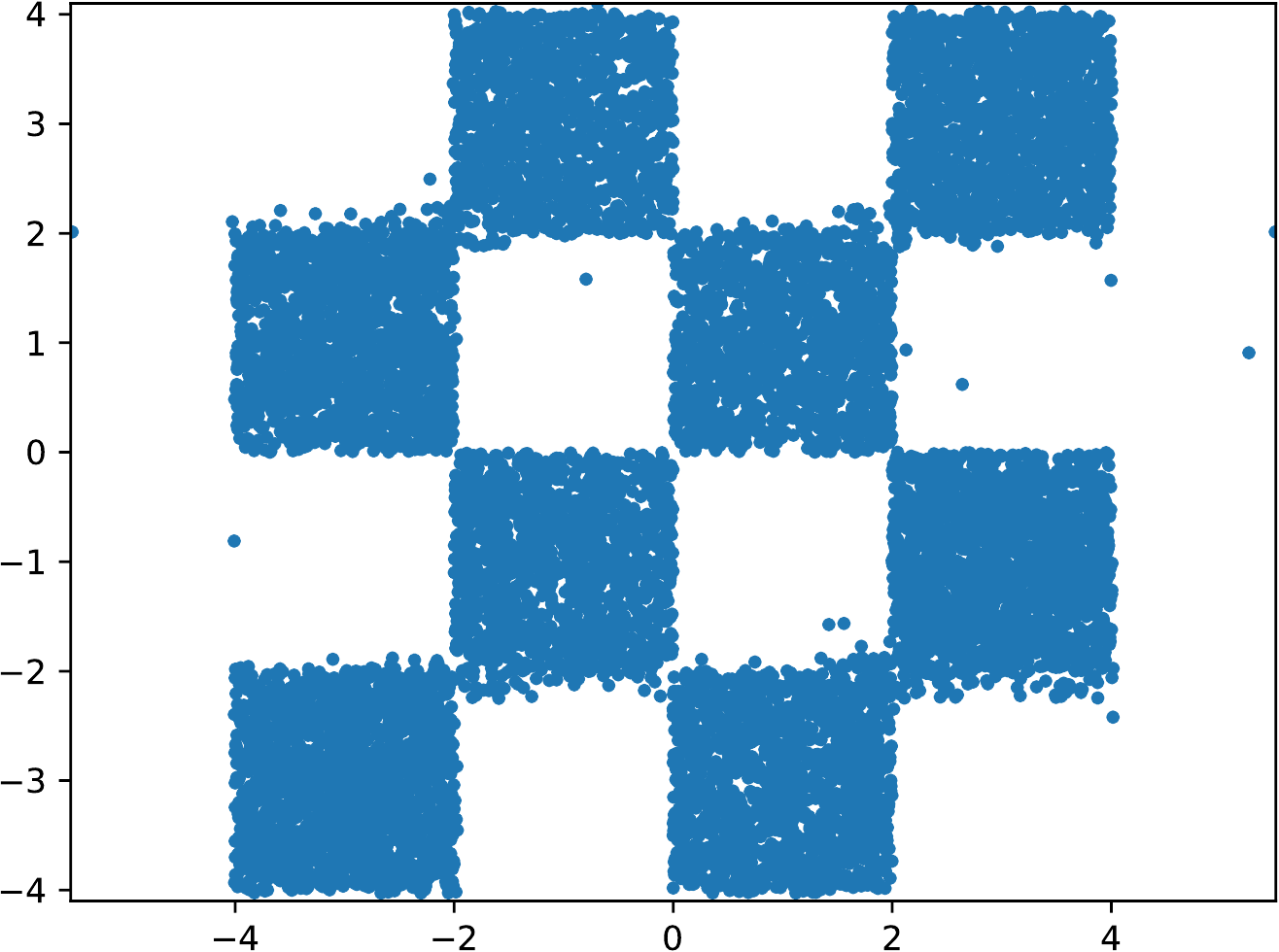} &
	\includegraphics[width=0.14\textwidth]{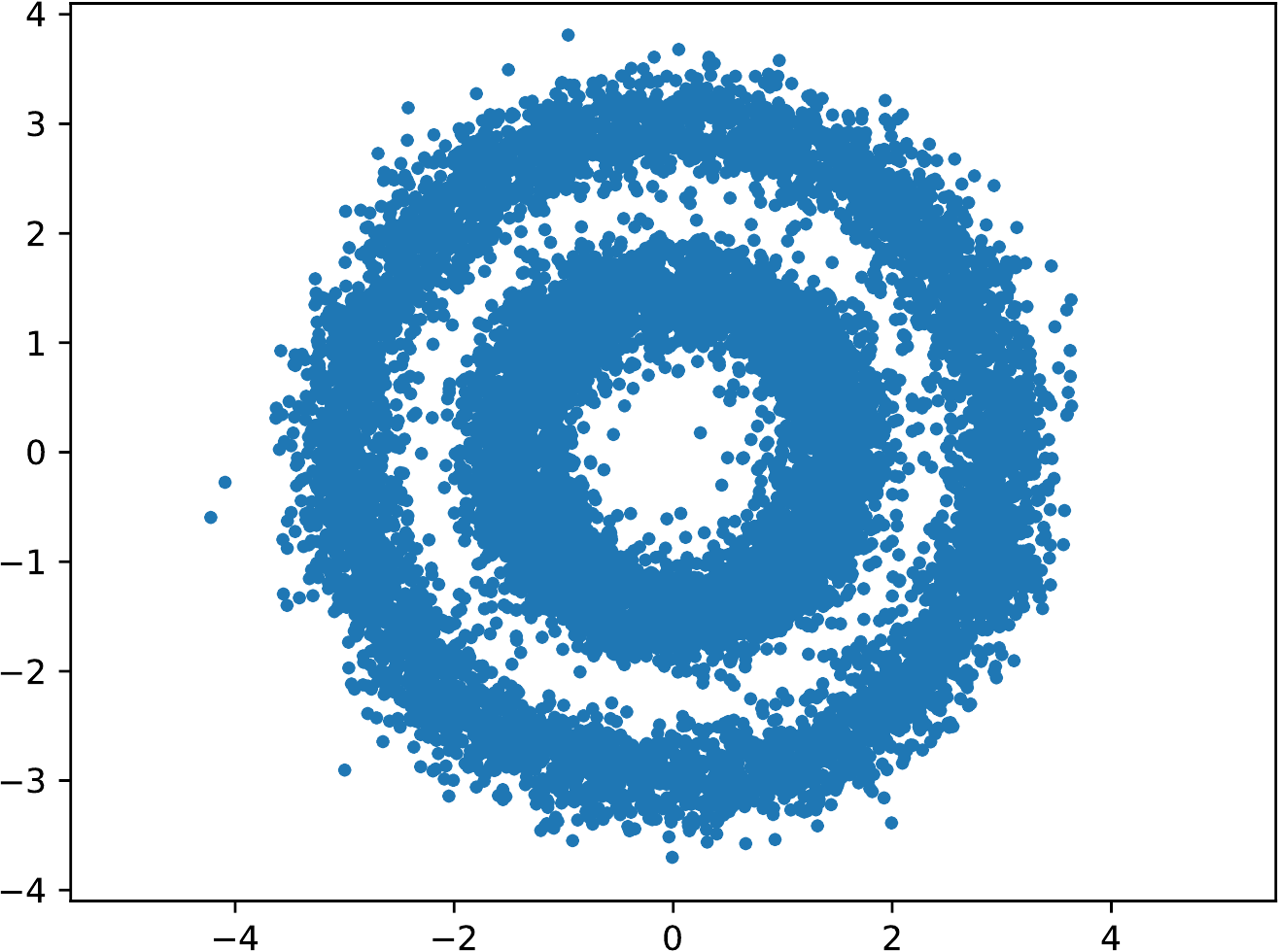} &
	\includegraphics[width=0.14\textwidth]{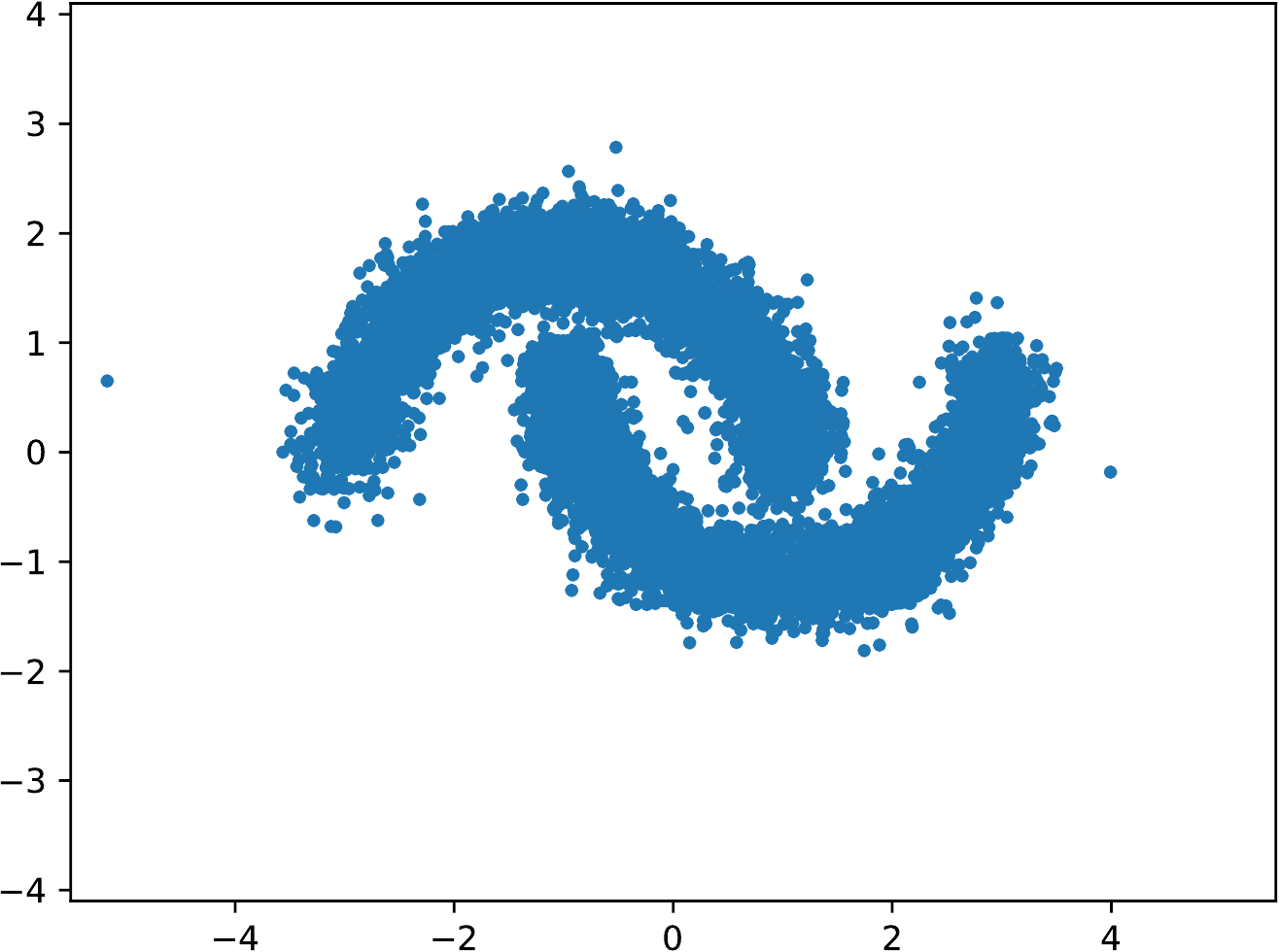} &
	\includegraphics[width=0.14\textwidth]{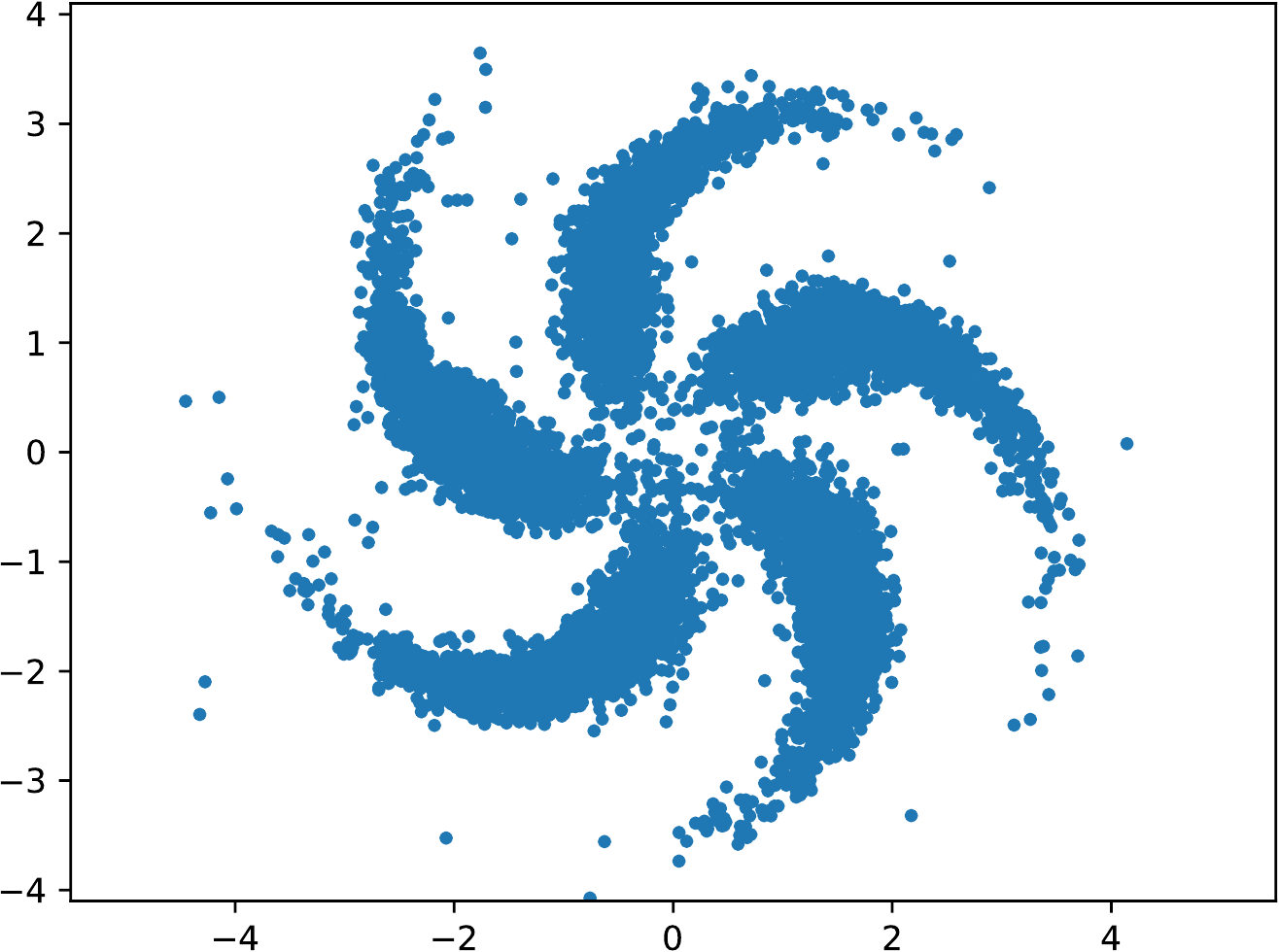} &
	\includegraphics[width=0.14\textwidth]{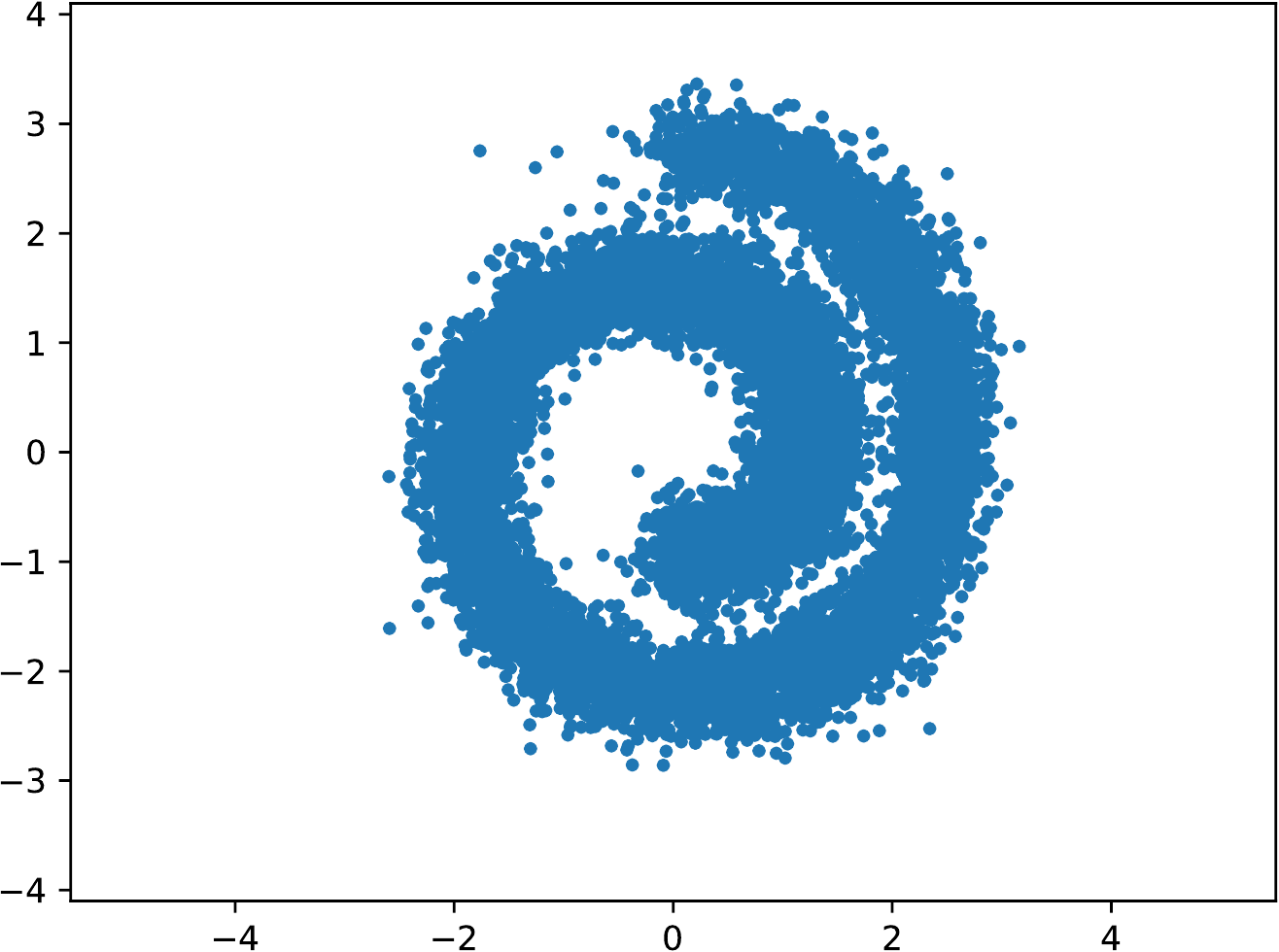}  
	\\
	\includegraphics[width=0.14\textwidth]{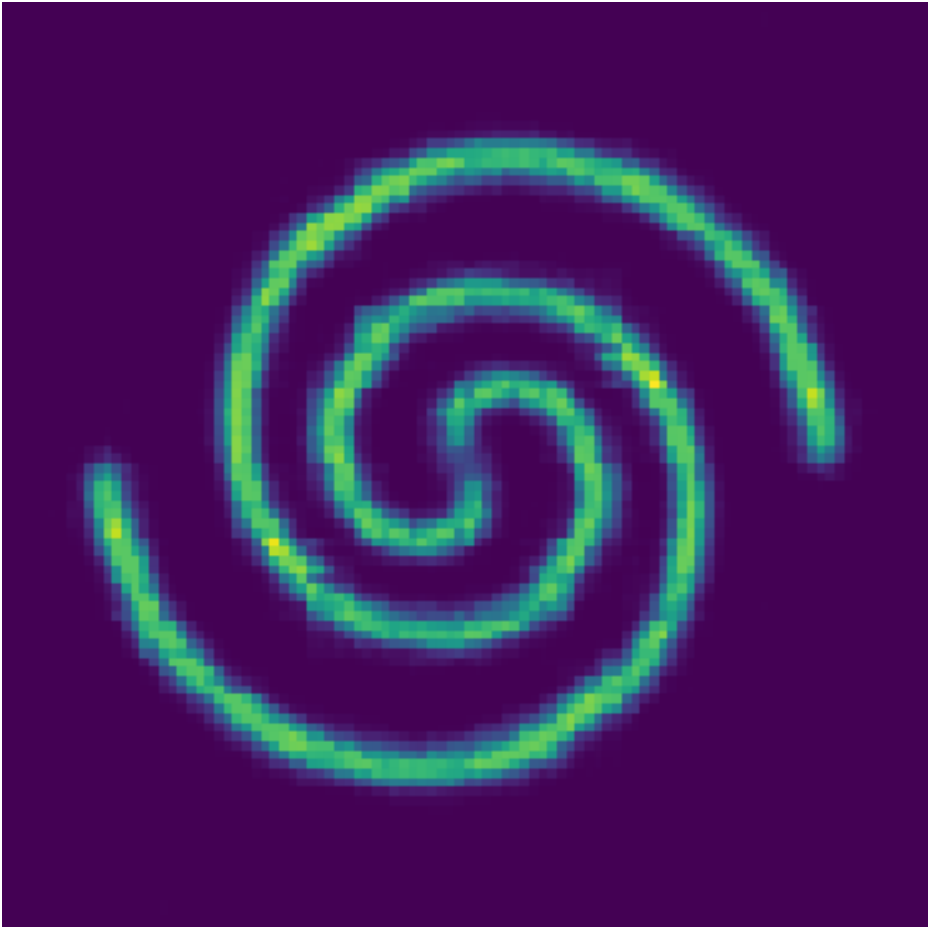} & 
	\includegraphics[width=0.14\textwidth]{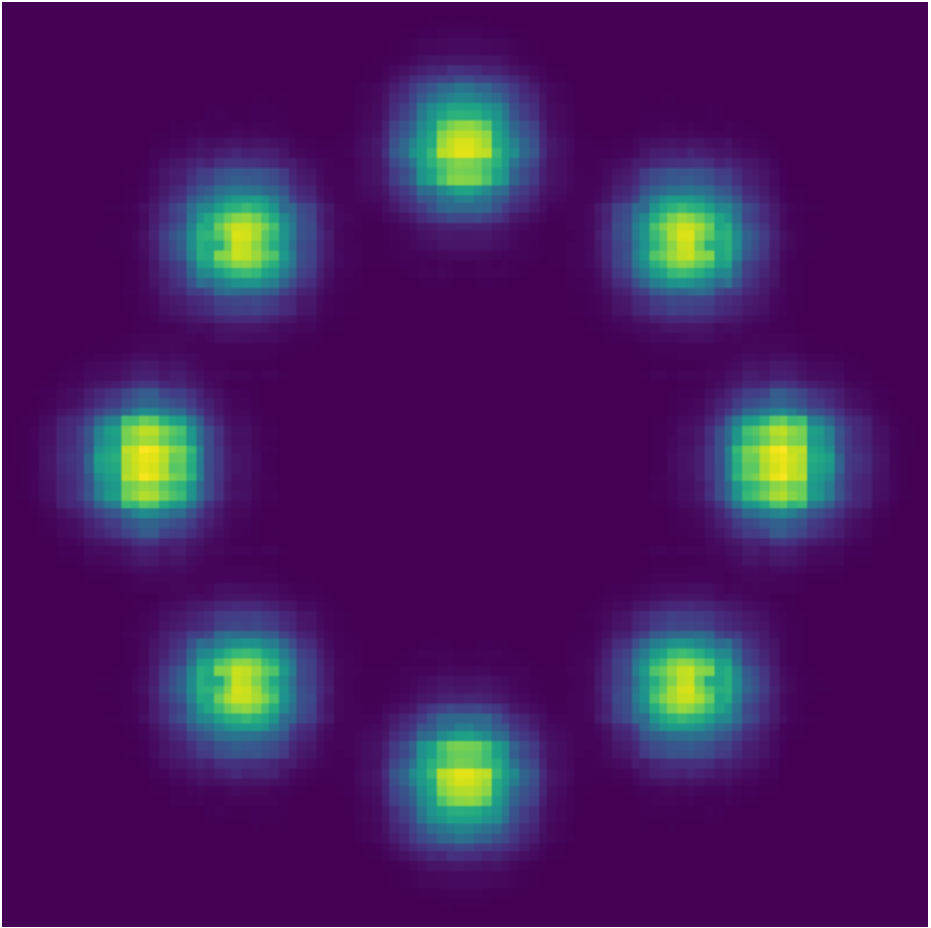} & 
	\includegraphics[width=0.14\textwidth]{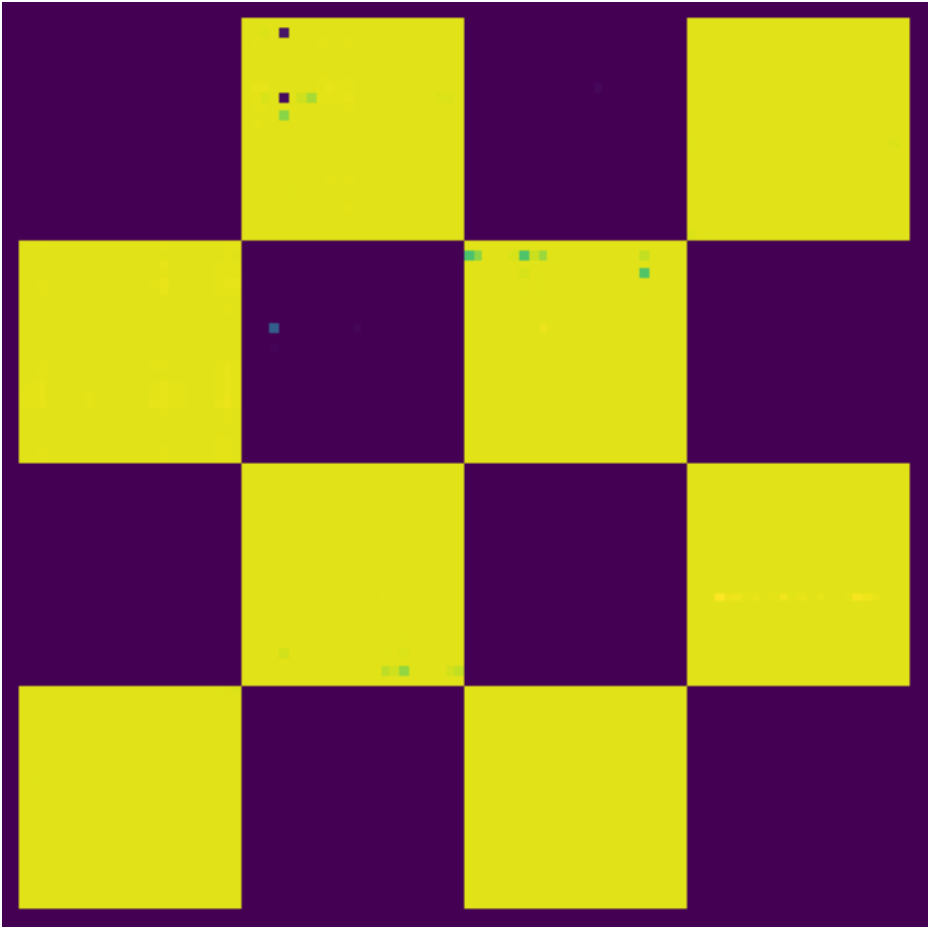} & 
	\includegraphics[width=0.14\textwidth]{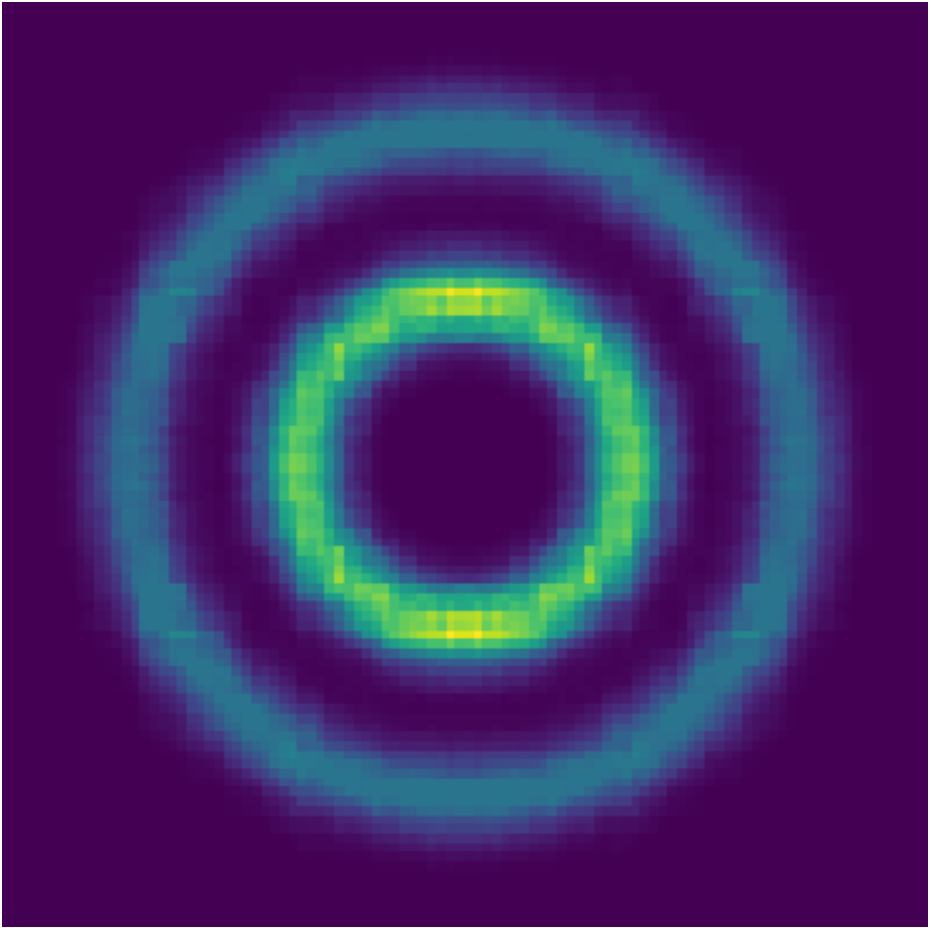} & 
	\includegraphics[width=0.14\textwidth]{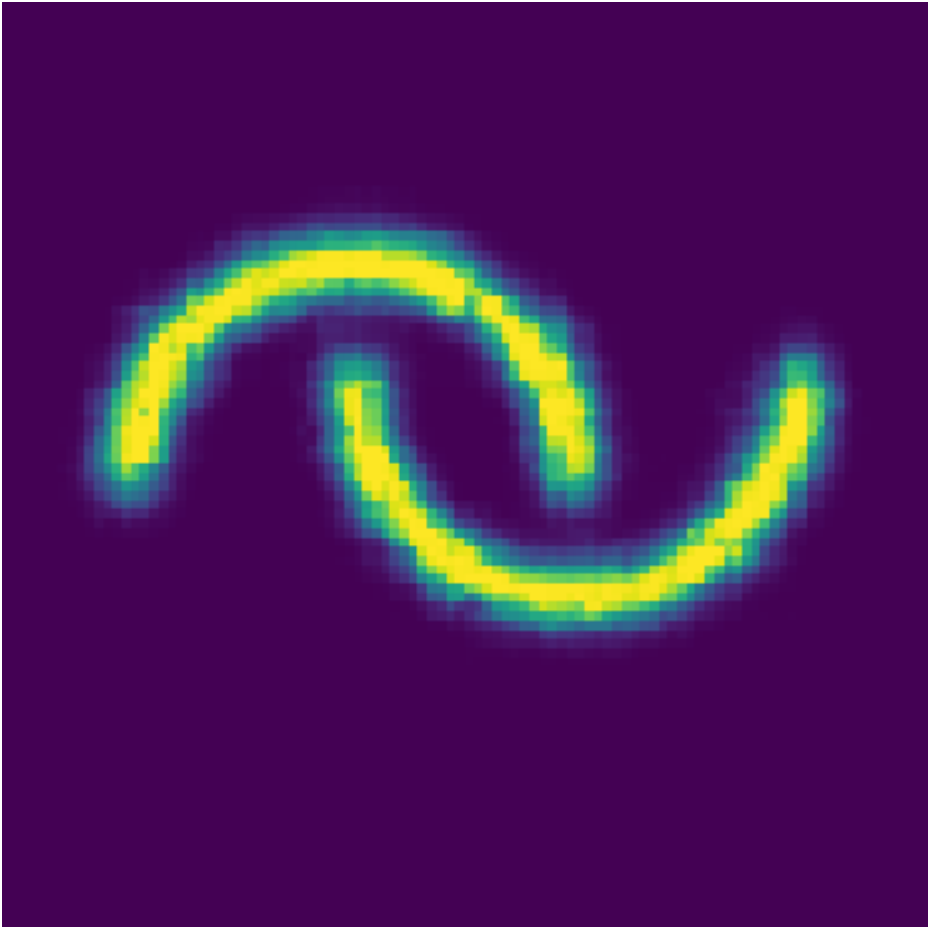} & 
	\includegraphics[width=0.14\textwidth]{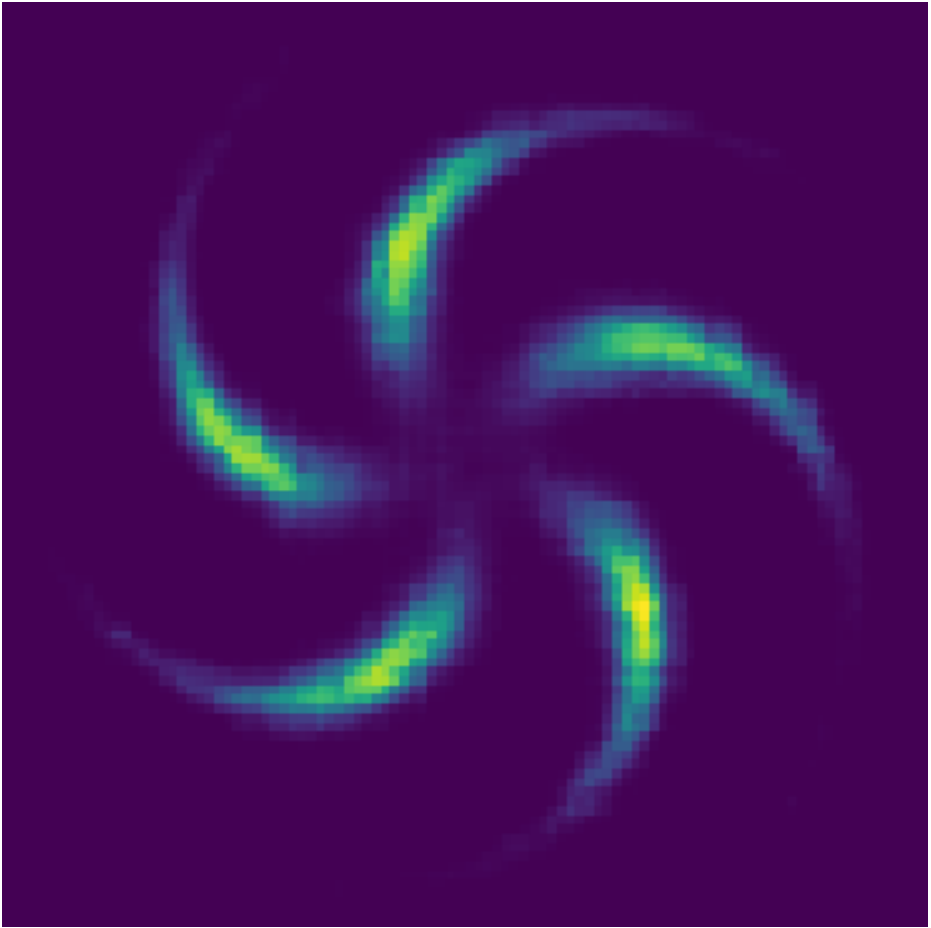} &
	\includegraphics[width=0.14\textwidth]{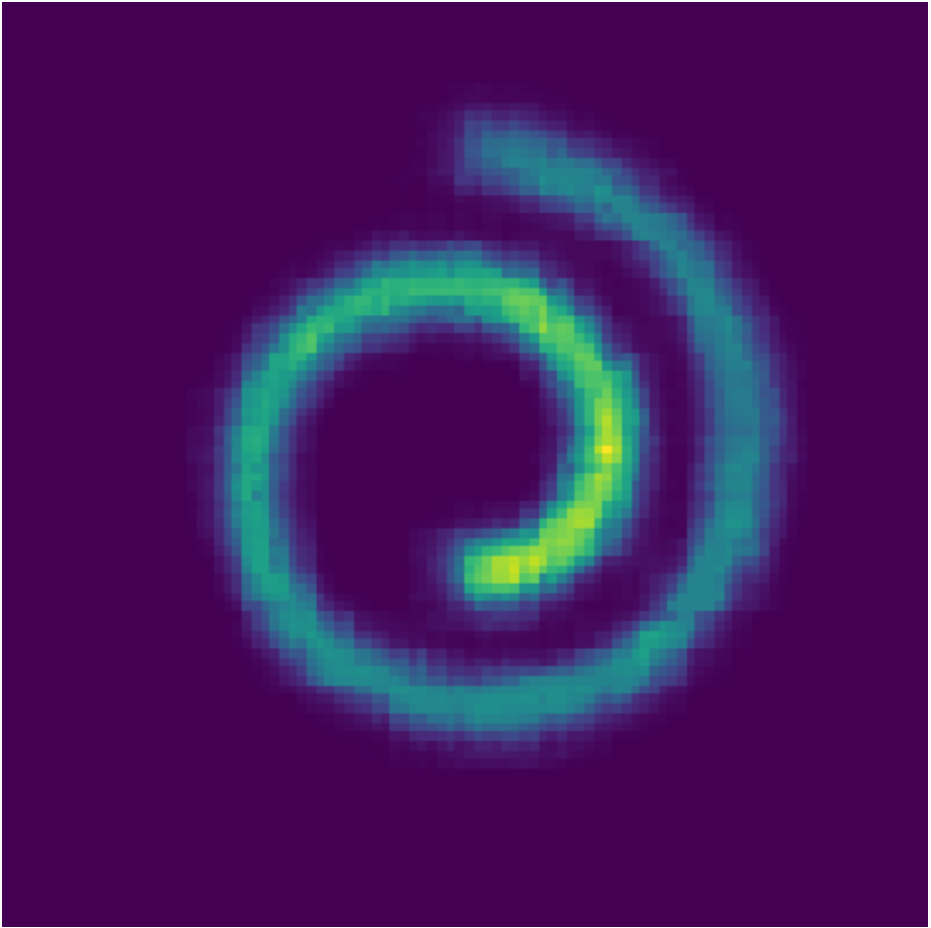}
\end{tabular}
\caption{Visualization of learned energy model and sampler. From left to right: 2spirals, 8gaussians, checkerboard, circles, moons, pinwheel, swissroll. Due to the limited space, please refer to \figref{fig:sample_gt} in appendix for the visualization of training samples. \label{fig:synthetic_vis}}
\end{figure*}

\begin{figure*}
\centering
\setlength{\tabcolsep}{0.5em}
\begin{tabular}{@{}ccc}
\includegraphics[width=0.33\textwidth]{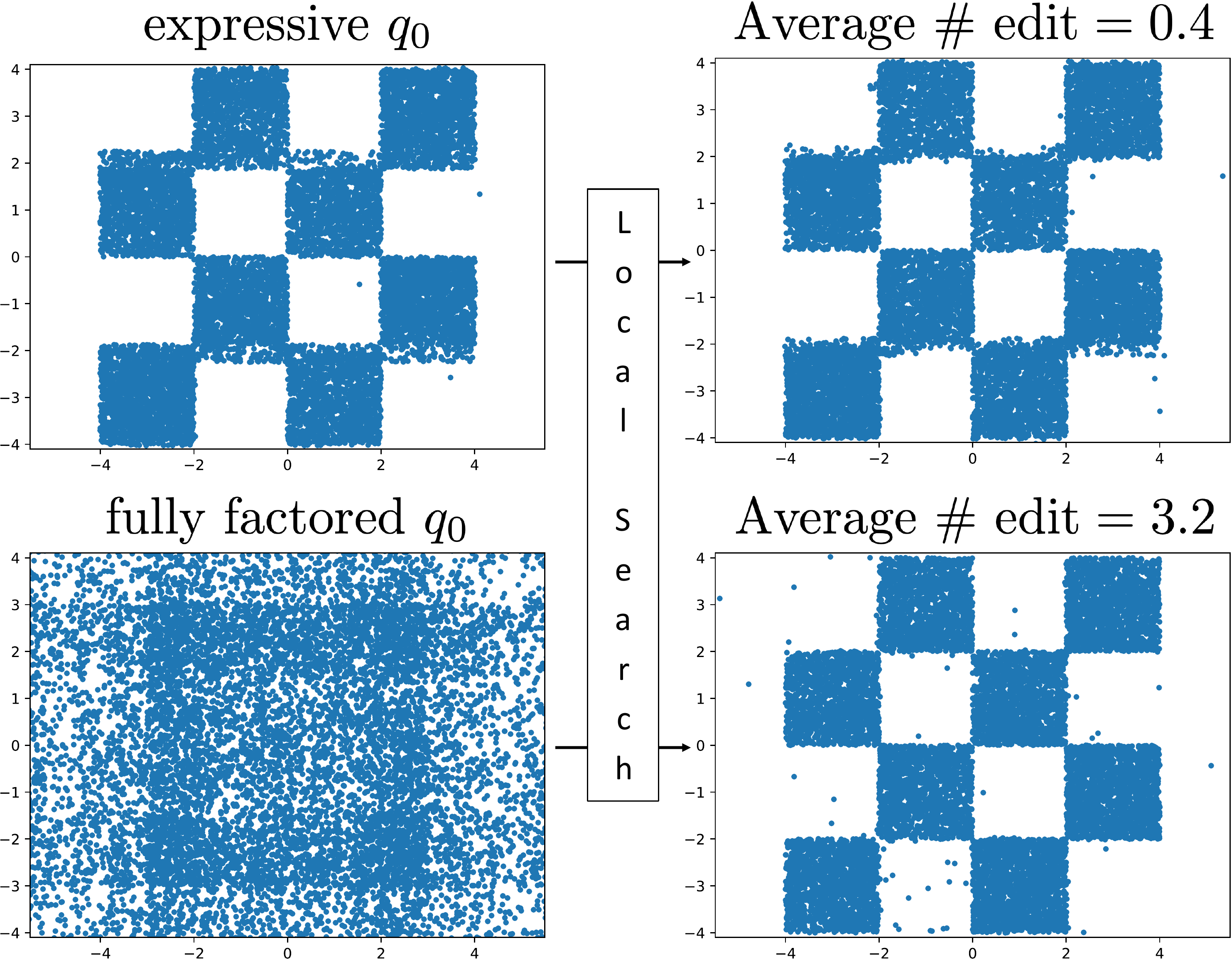} & 
\includegraphics[width=0.29\textwidth]{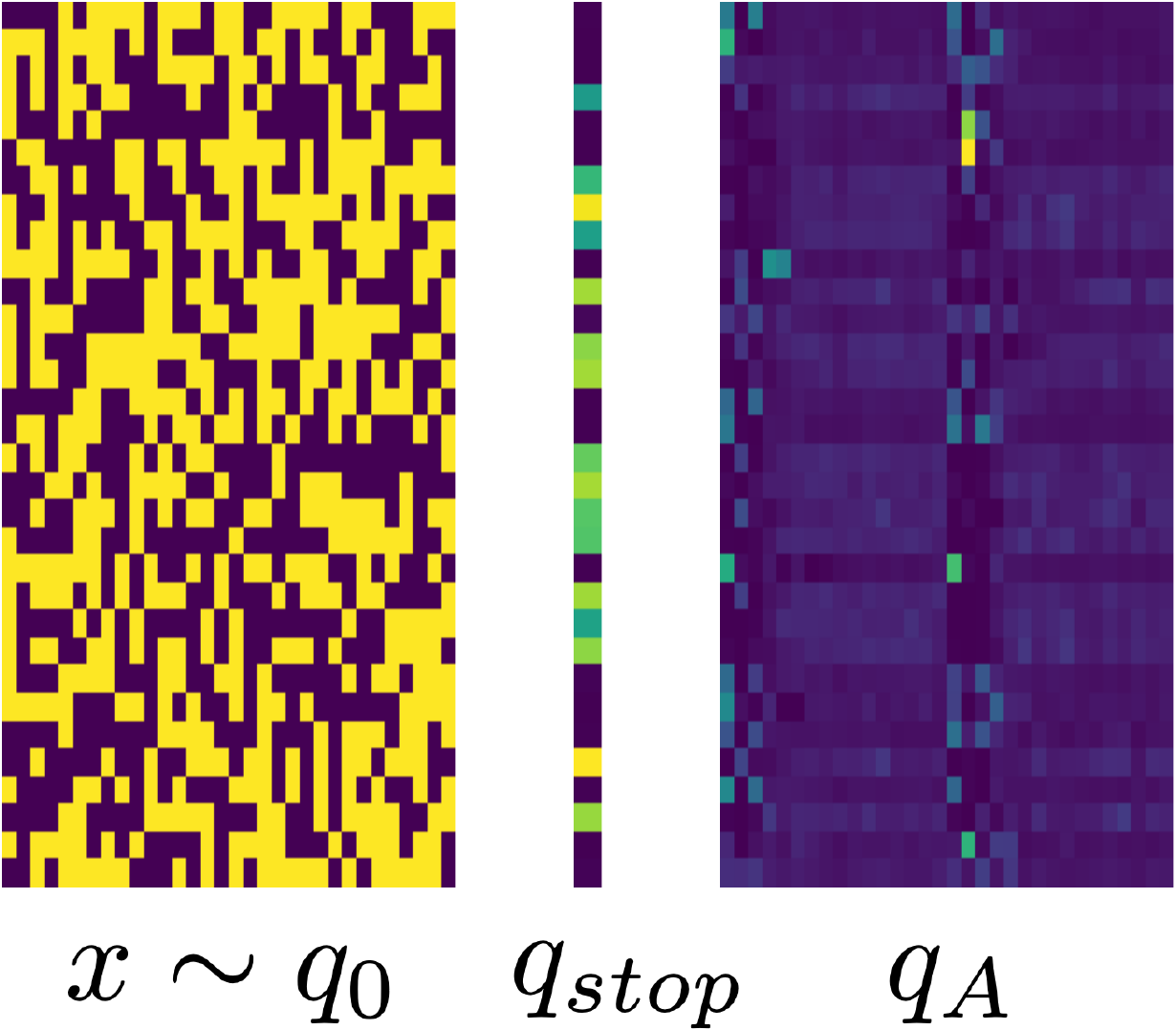} & 
\includegraphics[width=0.29\textwidth]{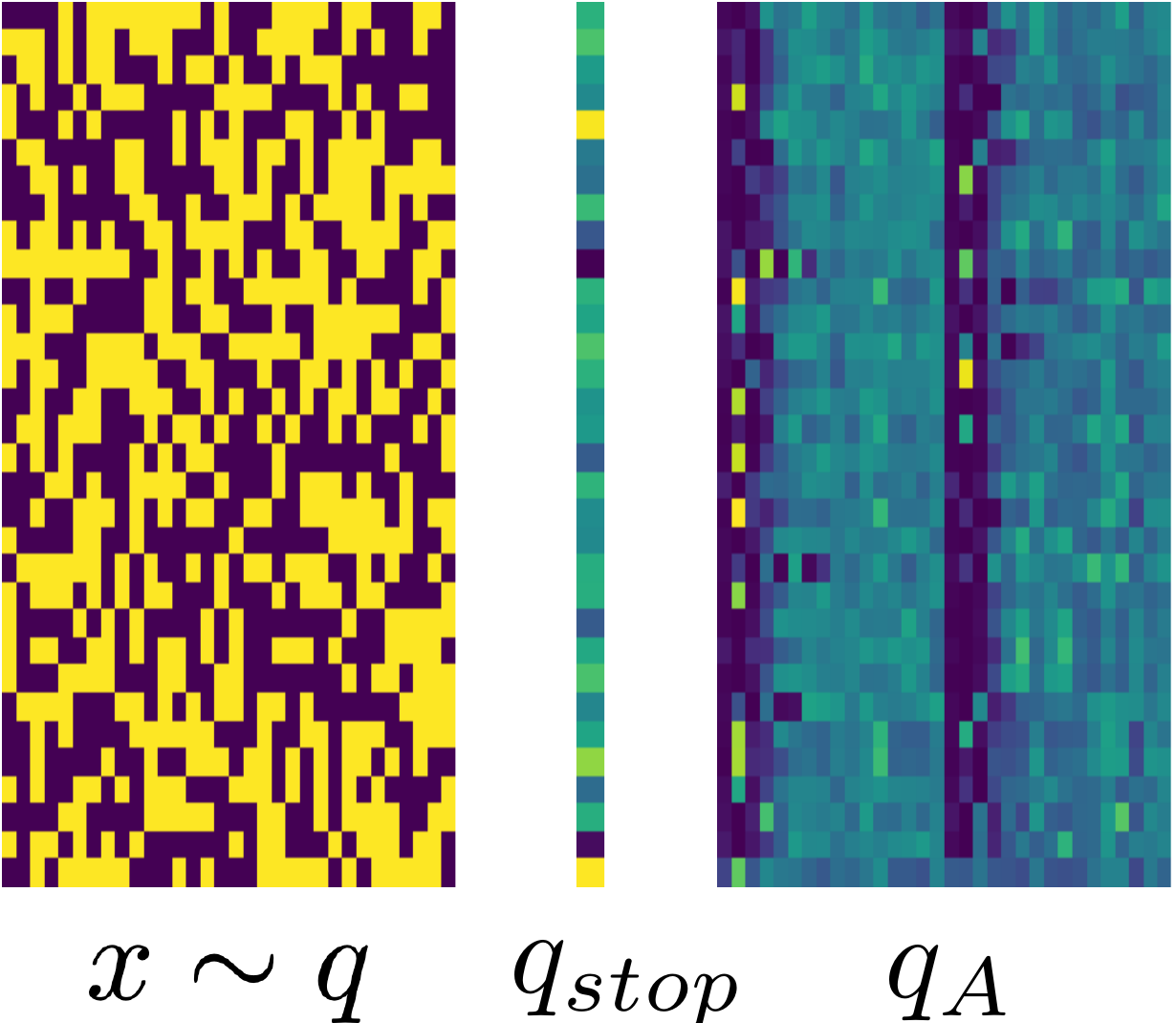} \\
Sampler visualization in 2D space & Run edit on initial $x_0$ & Run edit on $x$ from sampler 
\end{tabular}
\caption{Visualization of learned local search sampler in 2D (left) and original discrete Gray code (mid + right) space. See~\secref{sec:exp_synthetic} for more information. }
\label{fig:vis_q}
\end{figure*}

\begin{table}[t]
\centering
\caption{Synthetic results with MMD-hamming {($\mathbf{\times 1e^{-3}}$)} as evaluation metric, and the lower the better. * denote the discrete adaptation of its original method for continuous domain. \label{tab:synthetic}}
\renewcommand{\arraystretch}{1.1}
\resizebox{1.0\textwidth}{!}{%
\begin{tabular}{ccccccccc}
	\toprule
	& & 2spirals & 8gaussians & circles & moons & pinwheel & swissroll & checkerboard \\
	\hline
	& PCD-10*~\citep{tieleman2008training,du2019implicit} & 34.73 & 0.3 & -0.3 & 0.48 & -0.42 & -0.49 & -1.04 \\
	& ADE*~\citep{dai2019exponential} & 33.4 & -0.28 & 2.01 & 2.16 & 7.64 & 6.12 & -0.69 \\
	& \modelshort{} & {\bf 30.37} & {\bf -0.97} & {\bf -0.83} & {\bf -0.64} & {\bf -0.64} & {\bf -0.58} & {\bf -1.7} \\
	\hline
	\hline
\parbox[t]{2mm}{\multirow{3}{*}{\rotatebox[origin=c]{90}{Ablation}}} & ADE-fac & 236.6 & 65.7 & 261.7 & 248.6 & 187.2 & 95.3 & 78.2 \\
& ALOE-fac-noEdit & 51.24 & 91.2 & 5.97 & 76.8 & 59.7 & 15 & 2.98 \\
& ALOE-fac-edit & 32.6 & 3 & {\bf -1.5} & 1.27 & 5.02 & 0.44 & {\bf -2.03} \\
\hline
\hline
\parbox[t]{2mm}{\multirow{2}{*}{\rotatebox[origin=c]{90}{Other}}} &  AutoRegressive & 32.7 & -0.3 & -0.8 & -0.45 & {\bf -1.27} & 0.31 & -0.2 \\
& VAE & 35.2 & 2.09 & 0.16 & 1.1 & 0.85 & 2.05 & -0.77 \\
	\bottomrule
\end{tabular}
}
\end{table}

\noindent\textbf{Main results:}
To quantitatively evaluate different methods, we use MMD~\citep{gretton2012kernel} with linear kernel (which corresponds to $32 - \text{HammingDistance}$) to evaluate the empirical distribution between true samples and samples from the learned energy function. To obtain samples from $f$, we run $20\times32$ steps of Gibbs sampling and collect 4000 samples. We can see from~\tabref{tab:synthetic} that ALOE consistently outperforms alternatives across all datasets. ADE variant is worse than PCD on some datasets, as REINFORCE based approaches typically requires careful treatment of the gradient variance. 

We also use VAE~\citep{kingma2013auto} or autoregressive model to learn the discrete distribution, where the results are shown in the ``Other'' section of~\tabref{tab:synthetic}. Note that these models are different, so the numerical comparison is mainly for the sake of completeness. \modelshort{} mainly focuses on learning a given energy based model, rather than proposing a new probabilistic model.

\noindent\textbf{Visualization:}
We first visualize the learned score function and sampler in~\figref{fig:synthetic_vis}. To plot the heat map, we first uniformly sample 10k 2D points with each dimension in $[-4, 4]$. Then we convert the floating-point numbers to Gray code to evaluate and visualize the score under learned $f$. Please refer to \appref{app:exp_synthetic} for more visualizations about baselines. 
In \figref{fig:vis_q}, we visualize the learned local search sampler in both discrete space and 2D space by decoding the Gray code. We can see ALOE gets reasonable quality even with a weak  $q_0(x) = \prod_{i=1}^{32} q_0(x[i])$, and it automatically adapts the refinement steps according to the quality of $q_0$. 

\begin{wrapfigure}{r}{0.58\textwidth}
\vspace{-5mm}
\begin{tabular}{@{}c@{}c@{}}
	\includegraphics[width=0.28\textwidth]{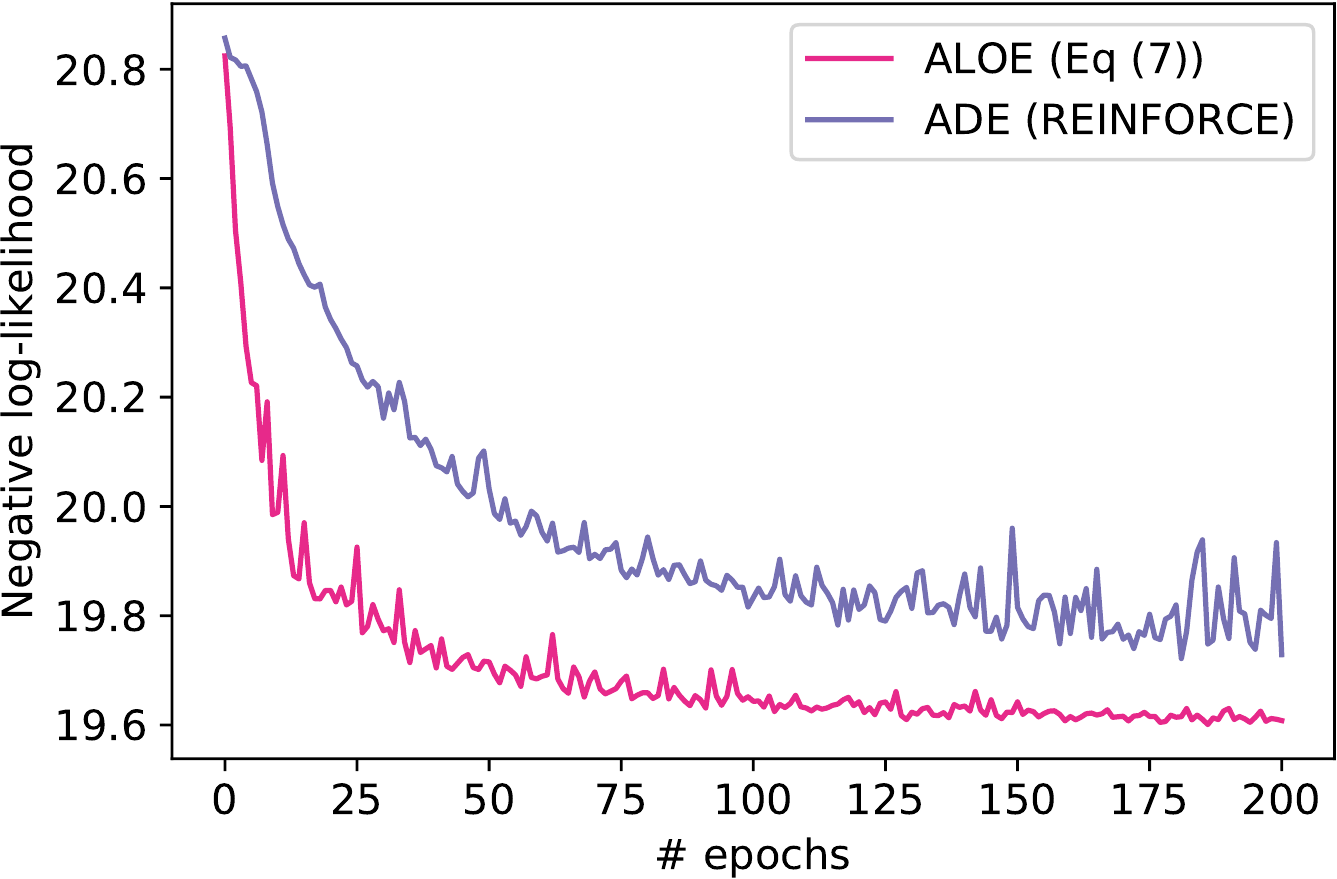} &
	\includegraphics[width=0.275\textwidth]{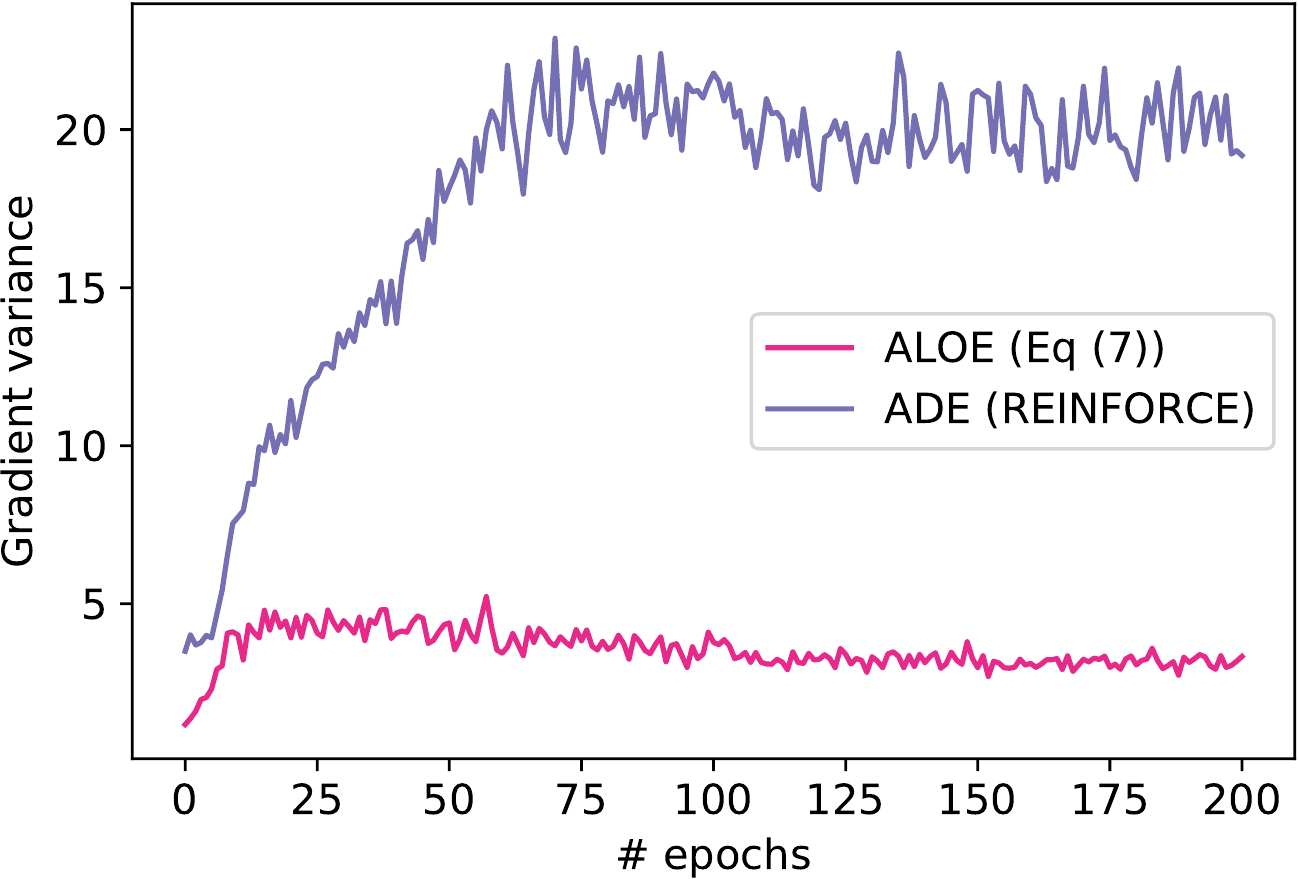}	 \\
	Negative Log-likelihood &
	Gradient Variance
\end{tabular}
\vspace{-1mm}
\caption{Training objective and gradient variance. \label{fig:grad_var} }
\vspace{-4mm}
\end{wrapfigure}

\noindent\textbf{Gradient variance:} Here we empirically justify the necessity of the variational power iteration objective design in \eqref{eq:variational_pi} against the REINFORCE objective. We train ADE and \modelshort{} (with only $q_0$ for comparison) on \texttt{pinwheel} data, and plot the negative log-likelihood of EBM (estimated via importance sampling) and the Monte Carlo estimation of gradient variance in \figref{fig:grad_var}. We can clearly see \modelshort{} enjoys lower variance and thus faster and better convergence than REINFORCE based methods for EBMs.

\noindent\textbf{Ablation:}
Here we try to justify the necessity of both local edits and the variational power iteration objective. \textbf{a)}To justify the local edits, we use a fully factorized initial $q_0$, and compare ALOE-fac-noEdit (no further edits) against ALOE-fac-edit (with $\leq$16 edits). 
ALOE-fac-edit performs much better than the noEdit version.  We use a weak $q_0$ here since we don't need many edits when $q_0$ is the powerful MLP with no parameter sharing (which is not feasible in realistic tasks). Nevertheless, ALOE automatically learns to adapt number of edits as studied in \figref{fig:vis_q} left. \textbf{b)}We also show the objective in \eqref{eq:variational_pi} achieves better results than the REINFORCE objective from ADE.

\vspace{-1mm}
\subsection{Generative fuzzing}
\label{sec:exp_fuzzing}
\vspace{-1mm}
\begin{figure*}[t]
\centering
\includegraphics[width=1.0\textwidth]{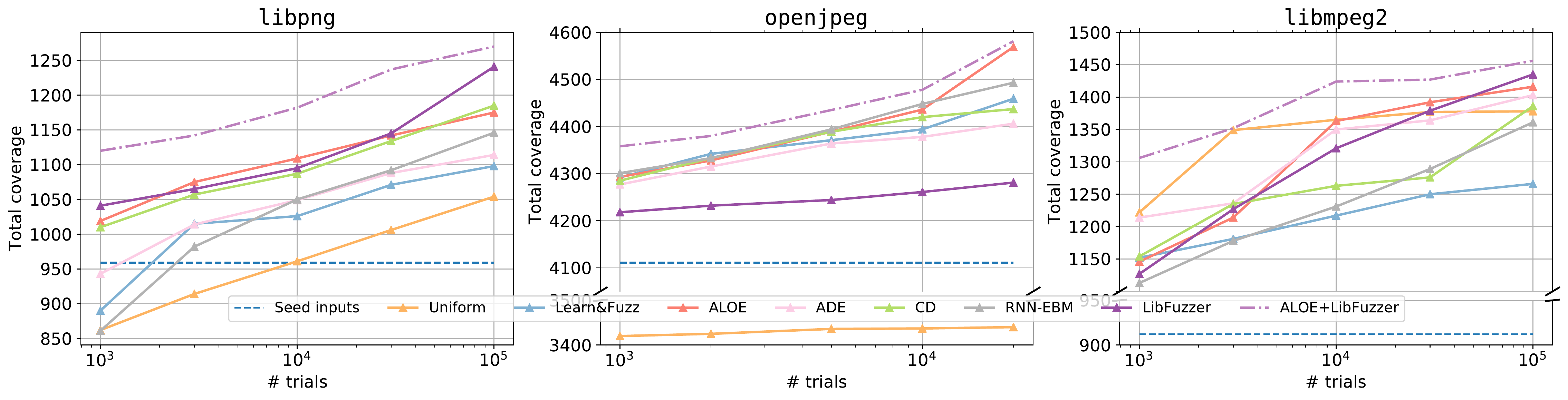}
\caption{Coverage statistics on different softwares with different amount of test inputs generated. \label{fig:fuzzing_curve} }
\end{figure*}

A critical step in software quality assurance is to generate random inputs to test the software for vulnerabilities, also known as fuzzing~\cite{miller1990empirical}. Recently learning based approaches have shown promising results for fuzzing~\citep{godefroid2017learn, dai2019learning}. In this section we focus on the generative fuzzing task, where we first learn a generative model from existing seed inputs (a set of software-dependent binary files) and then generate new inputs from the model to test the software. 
 
\noindent\textbf{Experiment setup:} We collect three software binaries (namely~\texttt{libpng},~\texttt{libmpeg2} and~\texttt{openjpeg}) from OSS-Fuzz\footnote{\url{https://github.com/google/oss-fuzz}} as test target. For all ML based methods, we use the seed inputs that come with OSS-Fuzz to learn the generative model. 
As each test input for software can be very large (\eg, a media file for~\texttt{libmpeg2}), we train a truncated EBM with a window size of 64. Specifically, we learn a conditional EBM $f(x|y)$, where $x \in \{0, \ldots, 255\}^{64}$ is a chunk of byte data and $y \in \{0, 1, \ldots\}$ is the position of this chunk in its original file. 

During inference, instead of generating test inputs from scratch (which would be too difficult to generate 1M bytes while still being parsable by the target software), we use the learned model to modify the seed inputs instead. To modify $i$-th byte of the byte string $\xb$ using learned EBM, we sample the byte $b \propto \exp( f([\xb_{i-31}, \ldots, b, \ldots, \xb_{i+32}]|i))$ by conditioning on its surrounding context.

We compare against the following generative model based methods:
\begin{itemize}[leftmargin=*,nosep]
	\item \texttt{Learn\&Fuzz}~\citep{godefroid2017learn}: this method learns an autoregressive model from sequences of byte data. We adapt its open-source implementation\footnote{\url{https://github.com/google/clusterfuzz/tree/master/src/python/bot/fuzzers/ml/rnn}}. To use the autoregressive model for mutating the seed inputs, we perform the edit by sampling $x_i \sim p(\cdot|\xb_{0:i-1})$ conditioned on its prefix.
	\item \texttt{ADE}~\citep{dai2019exponential}: This method parameterizes the model and initial sampler $q_0$ in the same way as \modelshort{}.
	\item \texttt{CD}: As PCD is not directly applicable for conditional EBM learning, we use CD instead. 
	\item \texttt{RNN-EBM}: It treats the autoregressive model learned by \texttt{Learn\&Fuzz} as an EBM, and mutates the seed inputs in the same way as other EBM based mutations.
\end{itemize}
We also use  uniform sampling (denoted as \texttt{Uniform}) over byte modifications as a baseline, and include \texttt{LibFuzzer} coverage with the same seed inputs as reference. Note that \texttt{LibFuzzer} is a well engineered system used commercially for fuzzing, which gathers feedback from the test program by monitoring which branches are taken during execution.
Therefore, this is supposed to be superior to generative approaches like EBMs, which do not incorporate this feedback. 

For all methods, we generate up to 100k inputs with 100 modifications for each.
The main evaluation measure is \emph{coverage}, which measures how many of the lines of code, branches, and so on, are exercised by the test inputs; higher is better. This statistic is reported by \texttt{LibFuzzer}\footnote{\url{https://llvm.org/docs/LibFuzzer.html}}. 

\noindent\textbf{Results} are shown in \figref{fig:fuzzing_curve}. 
Overall the discrete EBM learned by ALOE consistently outperforms the autoregressive model. 
Suprisingly, the coverage obtained by ALOE is comparable or even better than \texttt{LibFuzzer} on some targets, despite
the fact that \texttt{LibFuzzer} has access to more information about the program execution. In the long run, we believe
that this additional information will allow \texttt{LibFuzzer} to perform the best, it is still  appealing that ALOE has high sample efficiency
initially. Regarding several EBM based methods, we can see CD is comparable on \texttt{libpng} but for large target like \texttt{openjpeg} it performs much worse. ADE performs good initially on some targets but gets worse in the long run. Our hypothesis is that it is due to the lack of diversity, which suggests a potential mode drop problem that is common in REINFORCE based approaches.
The uniform baseline performs worst in most cases, except on \texttt{libmpeg2} early stage.
Our hypothesis is that the uniform fuzzer quickly triggers many branches that raise formatting errors, which explains its high coverage initially. 


We also combine the test inputs generated by \texttt{LibFuzzer} and ALOE (the orange dotted curve, for which the $x$-axis shows
the number of samples from each method). The coverage of this combined set of inputs is better than either individually,
showing that the methods are complementary.

\subsection{Program Synthesis}
\label{sec:exp_robustfill}

In program synthesis, the task is to predict the source code of a program given a few input-output (IO) pairs
that specify its behavior.
We evaluate ALOE on the RobustFill task~\citep{devlin2017robustfill} of generating string transformations.The purpose here is to evaluate the effect of proposed local edits in our sampler, so the other methods like ADE or PCD are not applicable here. 
For full details, see~\appref{app:exp_robustfill}. 

\begin{figure}[t]
\begin{minipage}{0.67\textwidth}
\captionof{table}{Program synthesis accuracy on RobustFill tasks~\citep{devlin2017robustfill}. \label{tab:robustfill_acc} }
\resizebox{1.0\textwidth}{!}{%
\begin{tabular}{cccc}
	\toprule
	& Top-1 Beam-1 & Top-1 Beam-10 & Top-10 Beam-10 \\
	\hline
	seq2seq-init & 45.86 & 55.49 & 58.66 \\
	seq2seq-tune & 47.86 & 57.52 & 60.62 \\
	\modelshort{} & {\bf 53.57} & {\bf 61.99} & {\bf 65.29} \\
	\bottomrule
\end{tabular}
}
\vspace{-2mm}
\end{minipage}%
\hfill
\begin{minipage}{0.3\textwidth}
	\includegraphics[width=1.0\textwidth]{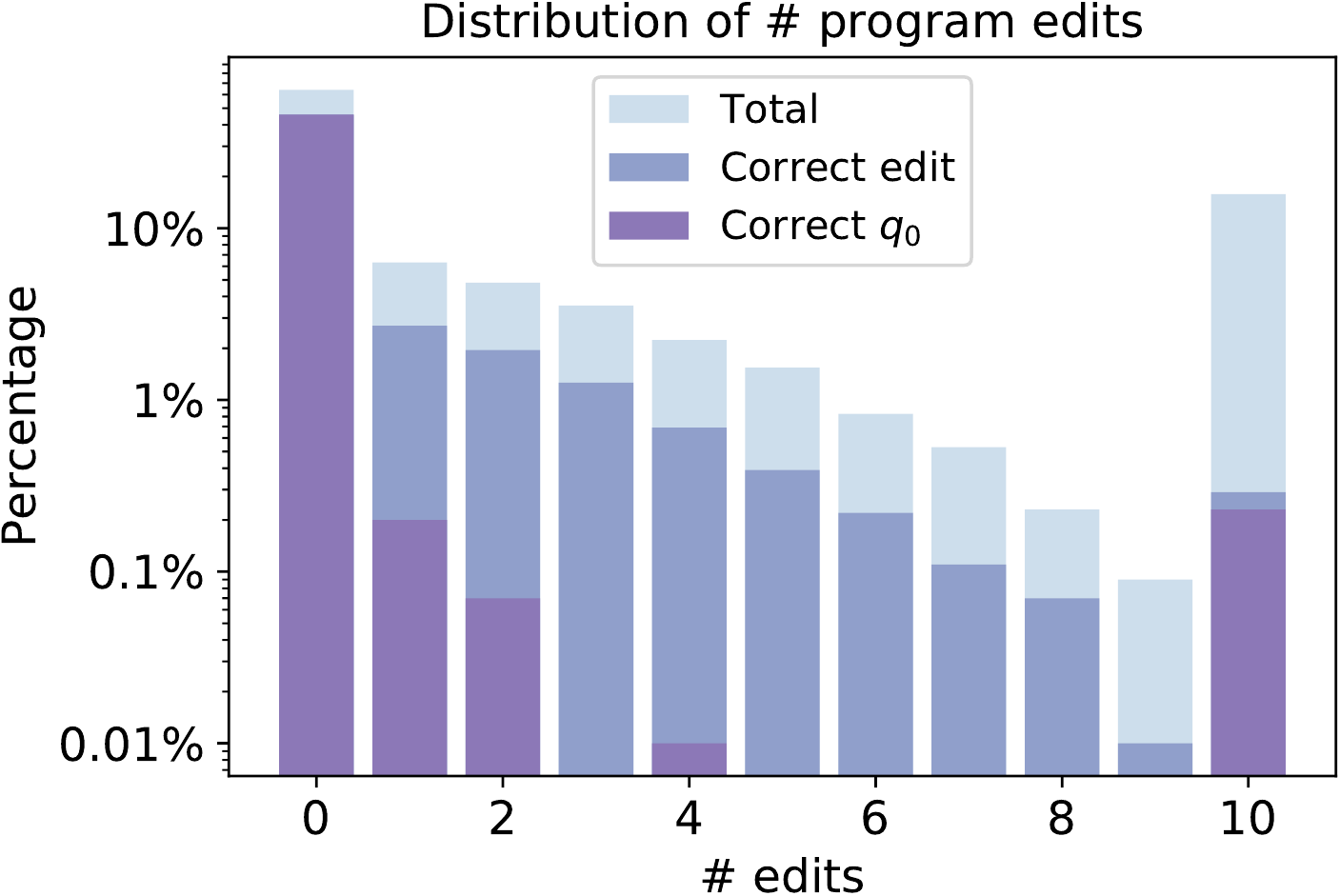}
\end{minipage}
\vspace{-2mm}
\end{figure}

\noindent \textbf{Experiment setup:} Data is generated synthetically, following~\citet{devlin2017robustfill}.
Each example in the data set is a synthesis task where 
the input is four IO pairs, the target is a program, and a further six IO pairs are held out for evaluation.
The training data is generated on the fly, while we keep 10k test examples for evaluation. Each target program consists of at most 10 sub-expressions in a domain-specific languages which includes string concatenation, substring operations, etc.
We report accuracy, which measures when the predicted program is consistent with all  $10$ IO pairs.

For ALOE we learn a conditional sampler $q(x|z)$ where $x$ is the program syntax tree, and $z$ is the list of input-output pairs. We compare with 3-layer seq2seq model for program prediction. Both seq2seq and ALOE share the same IO-pair encoder. As mentioned in \secref{sec:learn_local_search}, the initial distribution $q_0$ is the same as seq2seq autoregressive model, while subsequent modifications $A(x_{i+1}|x_i)$ adds/deletes/replaces one of the subexpressions. 
We train baseline seq2seq with 483M examples (denoted as seq2seq-init), and fine-tune with additional 264M samples (denoted as seq2seq-tune) with reduced learning rate. ALOE initializes $q_0$ from seq2seq-init and set it to be fixed, and train the editor $q_A(\cdot|\cdot)$ with same additional number of samples with the shortest edit importance proposal~\eqref{eq:short_proposal}.

\noindent\textbf{Results:}
We report the top-$k$ accuracy with different beam-search sizes in~\tabref{tab:robustfill_acc}. We can see ALOE outperforms the seq2seq baseline by a large margin. Although the initial sampler $q_0$ is the same as seq2seq-init, the editor $q_A(\cdot|\cdot)$ is able to further locate and correct sub-expressions of the initial prediction. In the figure to the right of~\tabref{tab:robustfill_acc}, We also visualize the number of edits our sampler makes on the test set. In most cases $q_0$ already produces correct results, and the sampler correctly learns to stop at step $0$. From 1 to 9 edits we can see the editor indeed improved from $q_0$ by a large margin. 
There are many cases which require $10$ or more edits, in which case we truncate the local search steps to 10. Some of them are difficult cases where the sampler learns to ask for more steps, while for others the sampler keeps modifying to semantically equivalent programs.

\vspace{-2mm}
\section{Conclusion}
\label{sec:conclusion}
\vspace{-2mm}

In this paper, we propose ALOE, a new algorithm for learning discrete EBMs for both conditional and unconditional cases. ALOE learns a sampler that is parameterized as a local search algorithm for proposing negative samples in contrastive learning framework. With an efficient importance reweighted gradient estimator, we are able to train both the sampler and the EBM with a variational power iteration principle. 
Experiments on both synthetic datasets and real-world software testing and program synthesis tasks show that both the learned EBM and local search sampler outperforms the autoregressive alternative. Future work includes better approximation of learning local search algorithms, as well as extending it to other discrete domains like chemical engineering and NLP.

\clearpage
\newpage

\section*{Broader Impact}

We hope our new algorithm ALOE for learning discrete EBMs can be useful for different domains with discrete structures, and it furthers the general research efforts in this direction of generative models of discrete structures. In this paper, we present its application to program synthesis and software fuzzing. A positive outcome of improved performance in program synthesis would be that it can help democratize the task of programming by allowing people to express their desired intent using input-output examples without the need of learning complex programming languages. Similarly, a positive outcome of improvements in software fuzzing could allow software developers to identify bugs and vulnerabilities quicker and in turn improve software reliability and robustness.

A possible negative outcome could be that malicious attackers might also use such technology to discover software vulnerabilities and use it for undesirable purposes~\cite{brundage2018malicious}. However, this outcome is not specific to our technique but more generally applicable to the large research field of software fuzzing, and there is a large amount of work in the fuzzing field for accounting ethical considerations. For example, the vulnerabilities typically found by fuzzers is first responsibly disclosed to corresponding software teams~\citep{householder2017cert} that gives them enough time to patch the vulnerabilities before the bugs and vulnerabilities are released publicly. 

\begin{ack}
We would like to thank Sherry Yang for helping with fuzzing experiments. 
We would also like to thank Adams Wei Yu, George Tucker, Yingtao Tian and anonymous reviewers for valuable comments and suggestions. 
\end{ack}

\clearpage
\newpage

\appendix

\begin{appendix}

\numberwithin{equation}{section}
\numberwithin{table}{section}
\numberwithin{figure}{section}

\thispagestyle{plain}
\begin{center}
{\huge Appendix}
\end{center}

\section{More experiment details}

\subsection{Synthetic experiments}
\label{app:exp_synthetic}

\paragraph{Baseline configuration}

We adapt two most recent techniques for learning EBMs into discrete case, namely~\citet{du2019implicit} and~\citet{dai2019exponential}. Specifically:

\begin{itemize}[leftmargin=*,nolistsep,nosep]
	\item \textbf{PCD based}: ~\citet{du2019implicit} extends the PCD method with replay buffer and random restart. We adapt these tricks in learning discrete EBMs. Specifically, we use Gibbs sampling for $K\times 32$-steps as the MCMC sampler, where $K$ is set to 10. Instead of always inheriting from previous MCMC samples, we tune the restart rate in $\{0.05, 0.1, 1\}$.
	\item \textbf{ADE based}: ADE solves the same minimax problem in Eq~\eqref{eq:minimax}, but instead directly minimizes the objective $L(q) := -\EE_{x \sim q}\sbr{f(x)} - H(q)$. To make ADE work in discrete case, optimizing $L(q)$ requires the policy gradient technique with variance reduction~\citep{mnih2014neural, gu2015muprop, tucker2017rebar, grathwohl2017backpropagation}, where the gradient estimator becomes $\nabla_q L(q) = \EE_{x \sim q} \nabla \log q(x) (-f(x) -\log q(x) - 1)$. This also resembles the learning of GAN~\citep{goodfellow2014generative} on sequences~\citep{yu2017seqgan} or graphs~\citep{de2018molgan}, except the additional entropy regularization term and constant. We use A2C~\citep{mnih2014neural} to learn ADE for discrete EBMs. As ADE uses alternating minimization for minimax problem, we tune the learning rate ratio and synchronization frequency between energy function and sampler learning in $\{0.2, 0.5, 1\}$ and $\{1:1, 1:3, 1:5\}$, respectively. 
\end{itemize}

\paragraph{More visualizations}

\begin{figure*}[h]
\centering
\setlength{\tabcolsep}{0.1em}
\begin{tabular}{ccccccc}
	\includegraphics[width=0.14\textwidth]{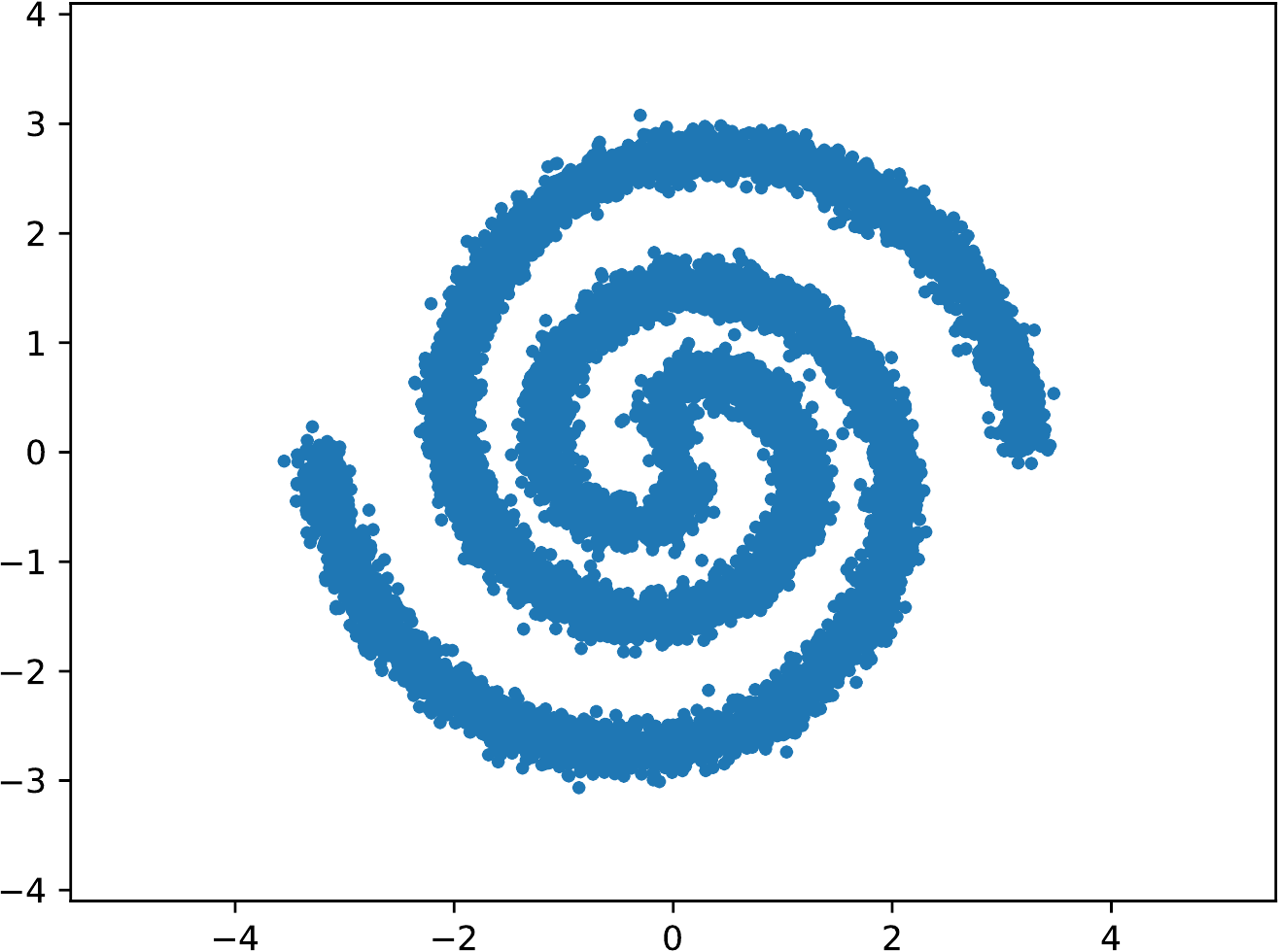} &
	\includegraphics[width=0.14\textwidth]{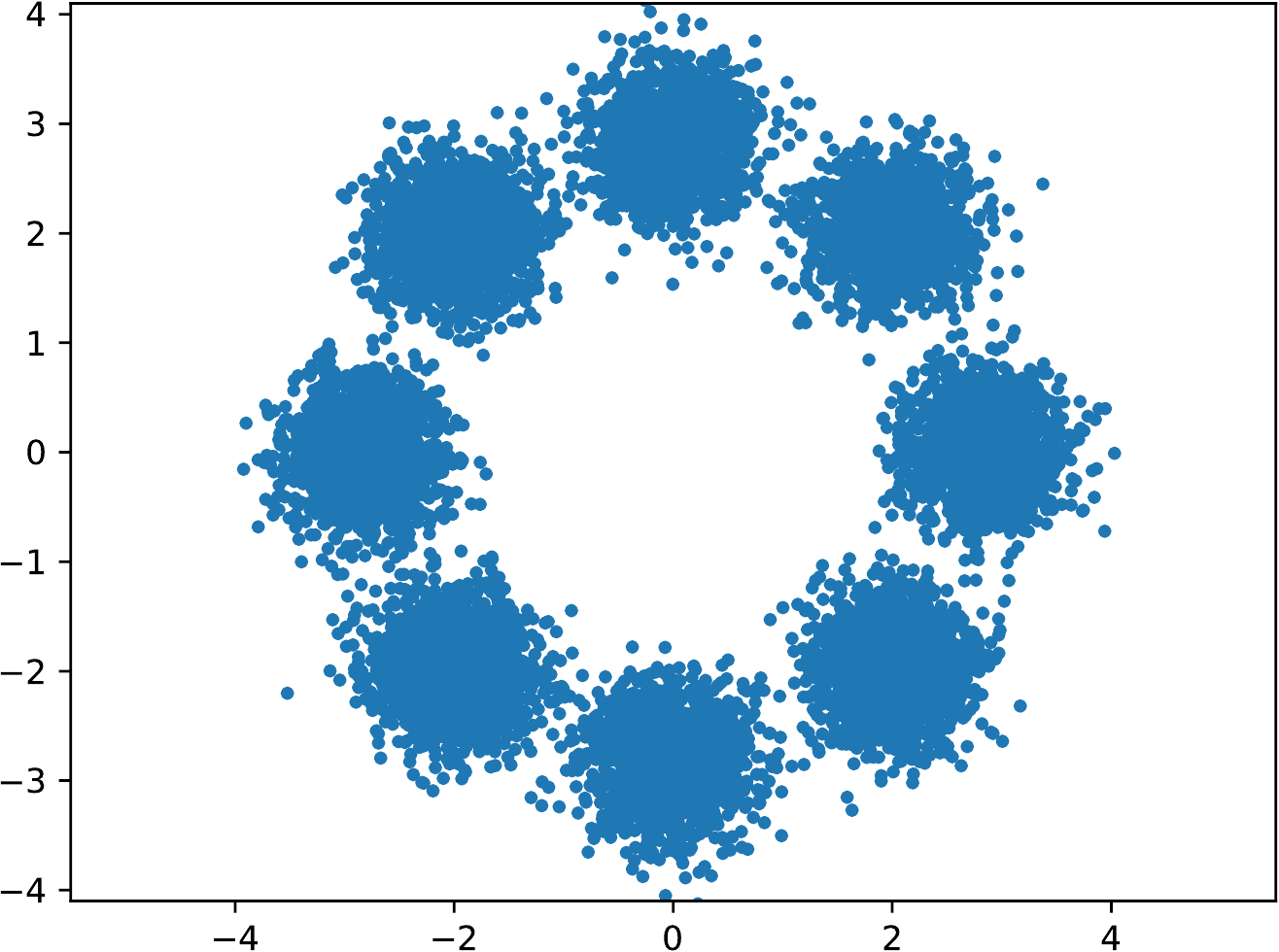} &
	\includegraphics[width=0.14\textwidth]{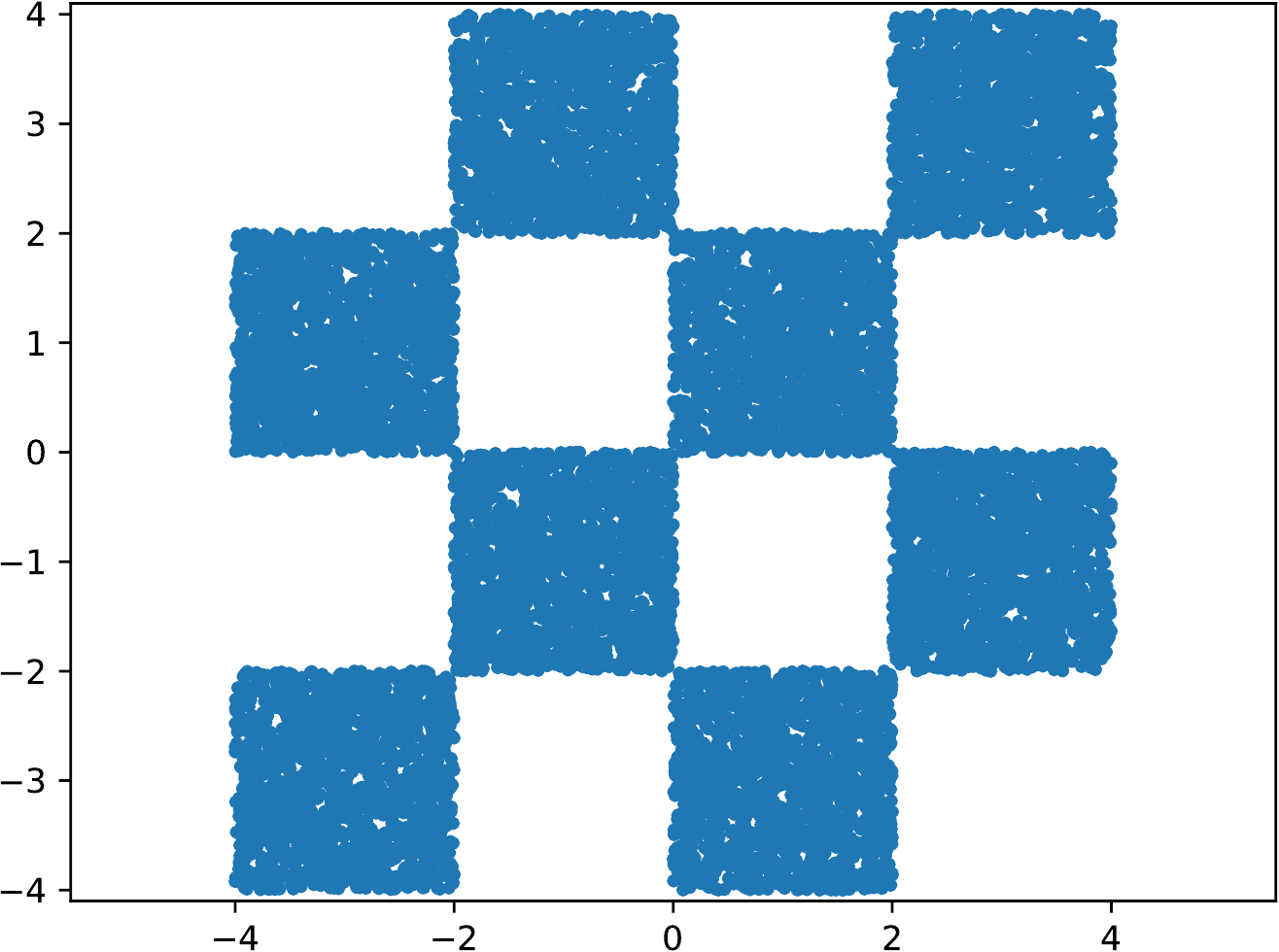} &
	\includegraphics[width=0.14\textwidth]{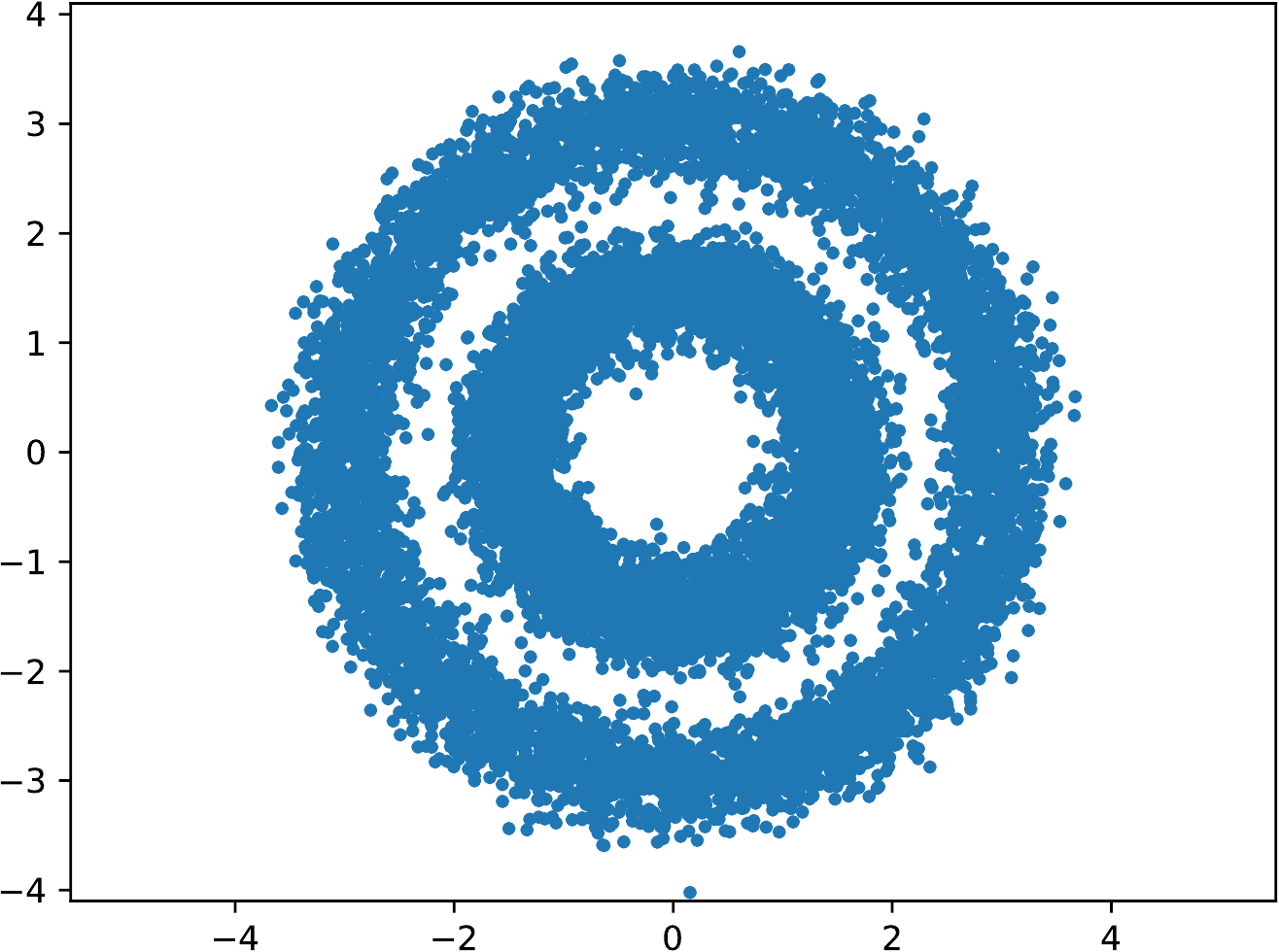} &
	\includegraphics[width=0.14\textwidth]{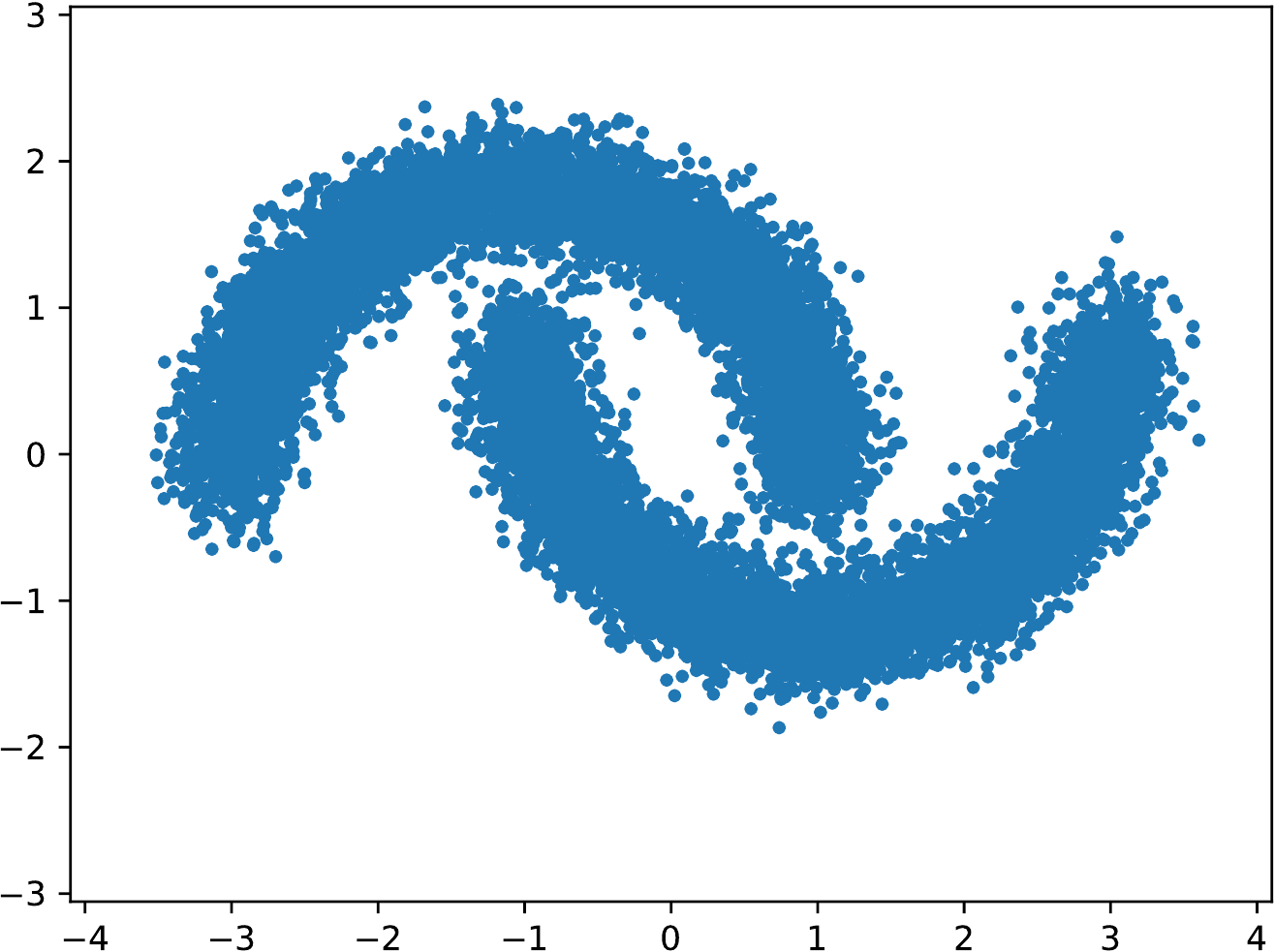} &
	\includegraphics[width=0.14\textwidth]{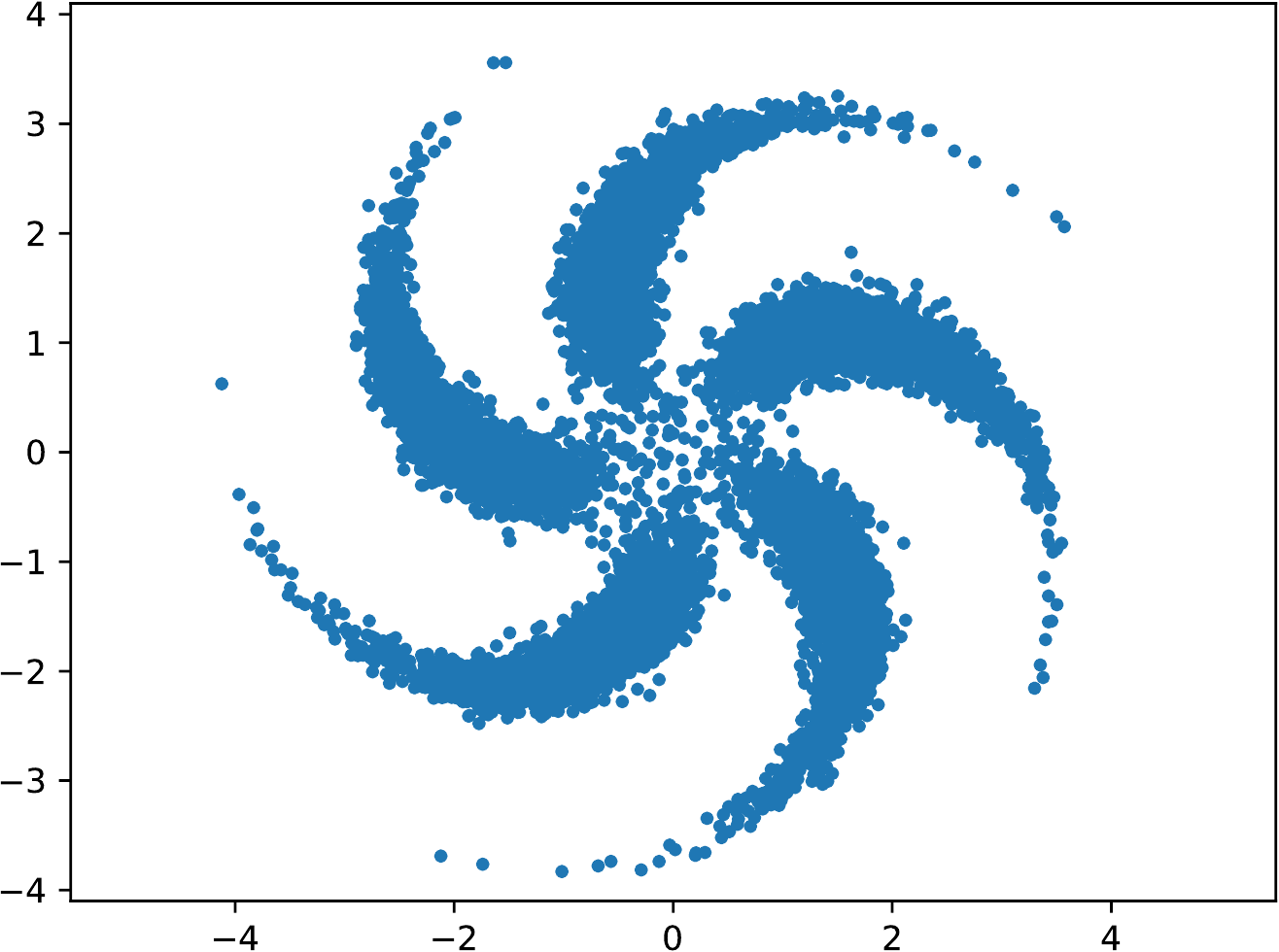} &
	\includegraphics[width=0.14\textwidth]{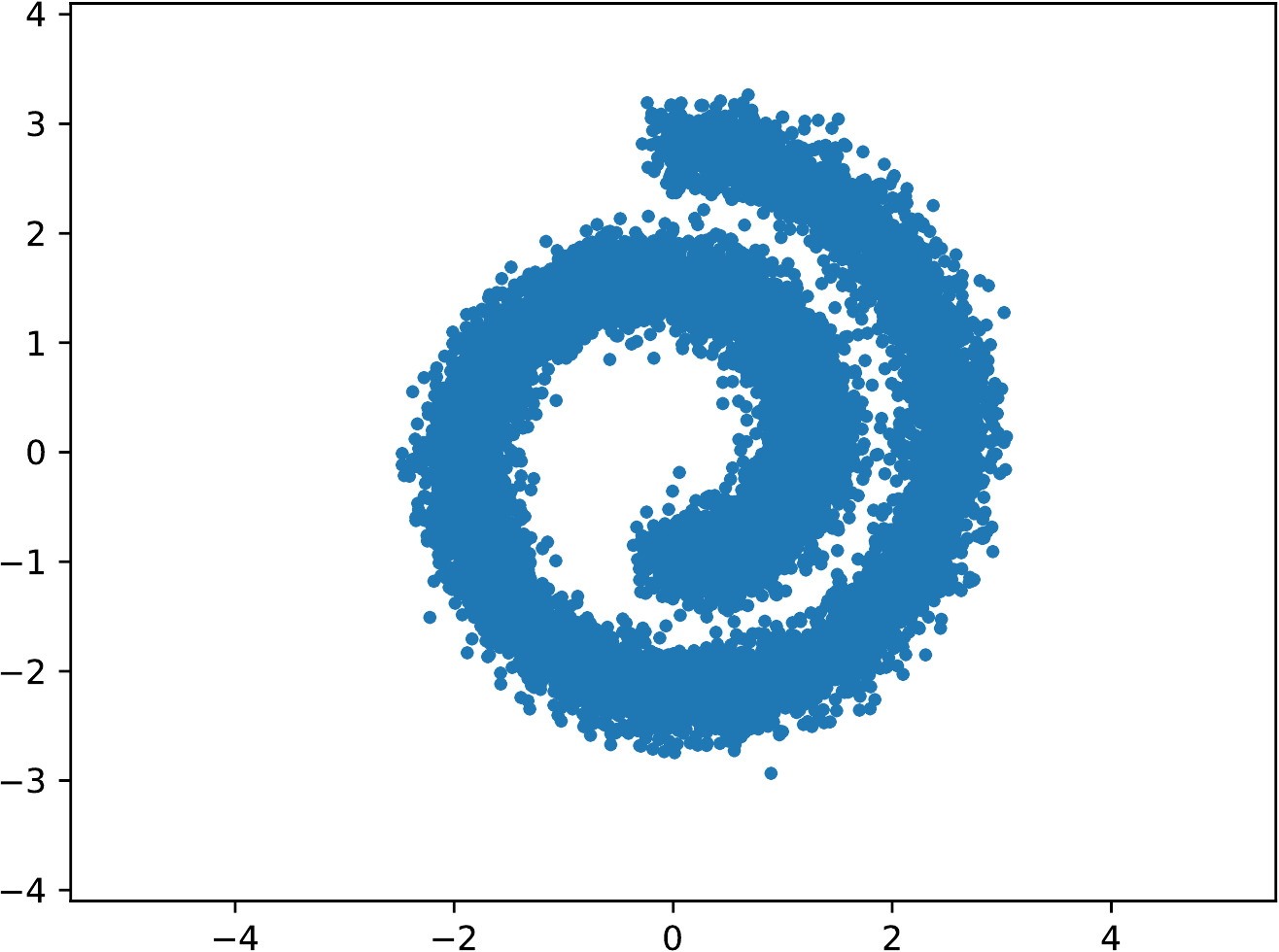} 
	\\
	2spirals & 8gaussians & checkerboard & circles & moons & pinwheel & swissroll \\
\end{tabular}
\caption{2D visualization of samples from the ground truth distribution. \label{fig:sample_gt}}
\end{figure*}

\begin{figure*}[h]
\centering
\setlength{\tabcolsep}{0.1em}
\begin{tabular}{cccccccc}
	\multicolumn{7}{c}{PCD*} \\
	\includegraphics[width=0.14\textwidth]{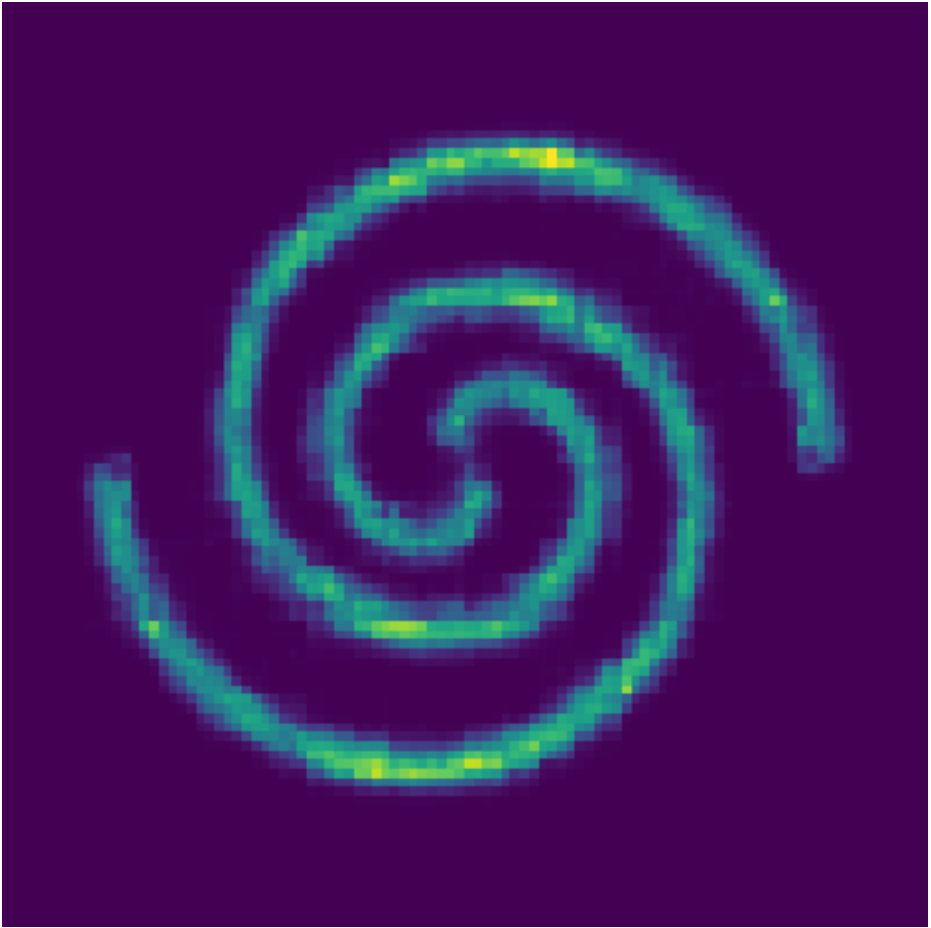} &
	\includegraphics[width=0.14\textwidth]{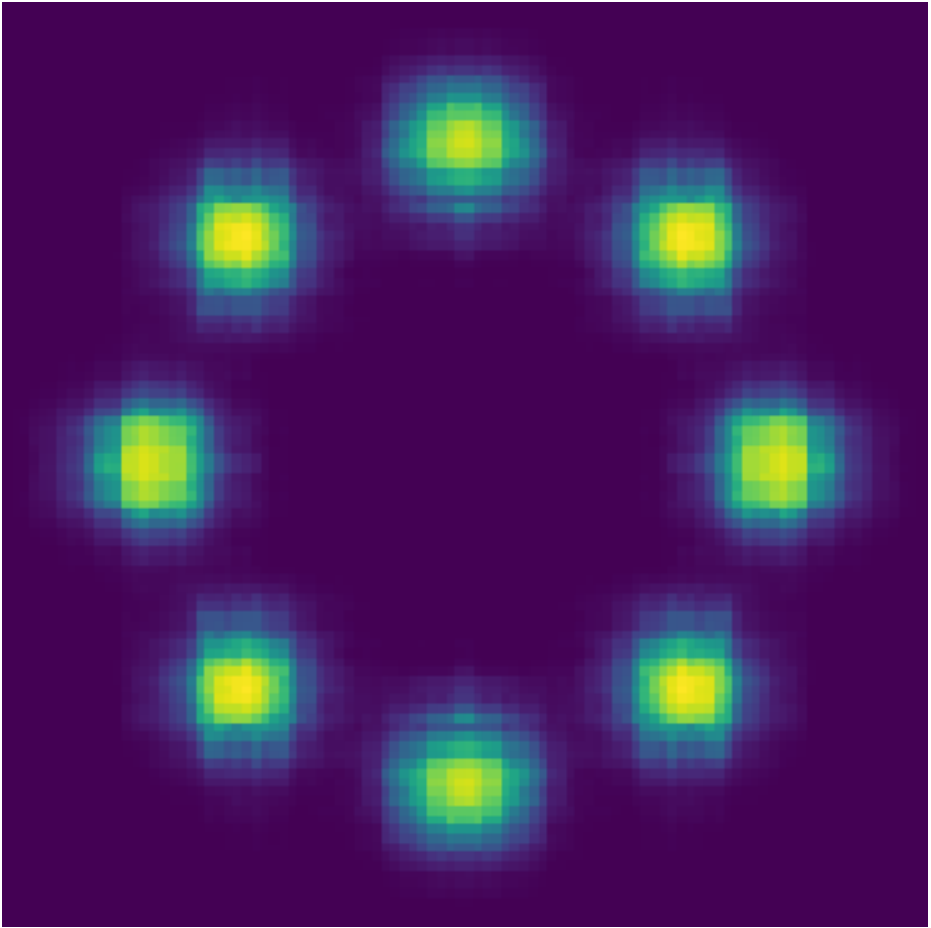} &
	\includegraphics[width=0.14\textwidth]{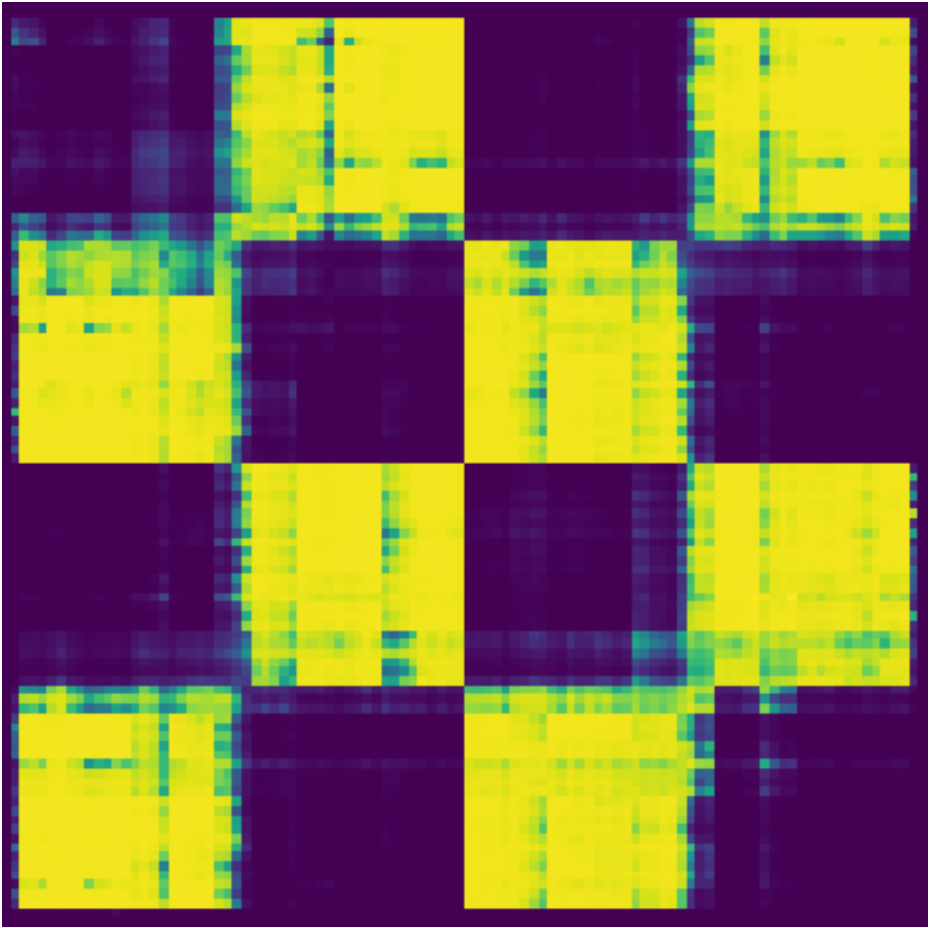} &
	\includegraphics[width=0.14\textwidth]{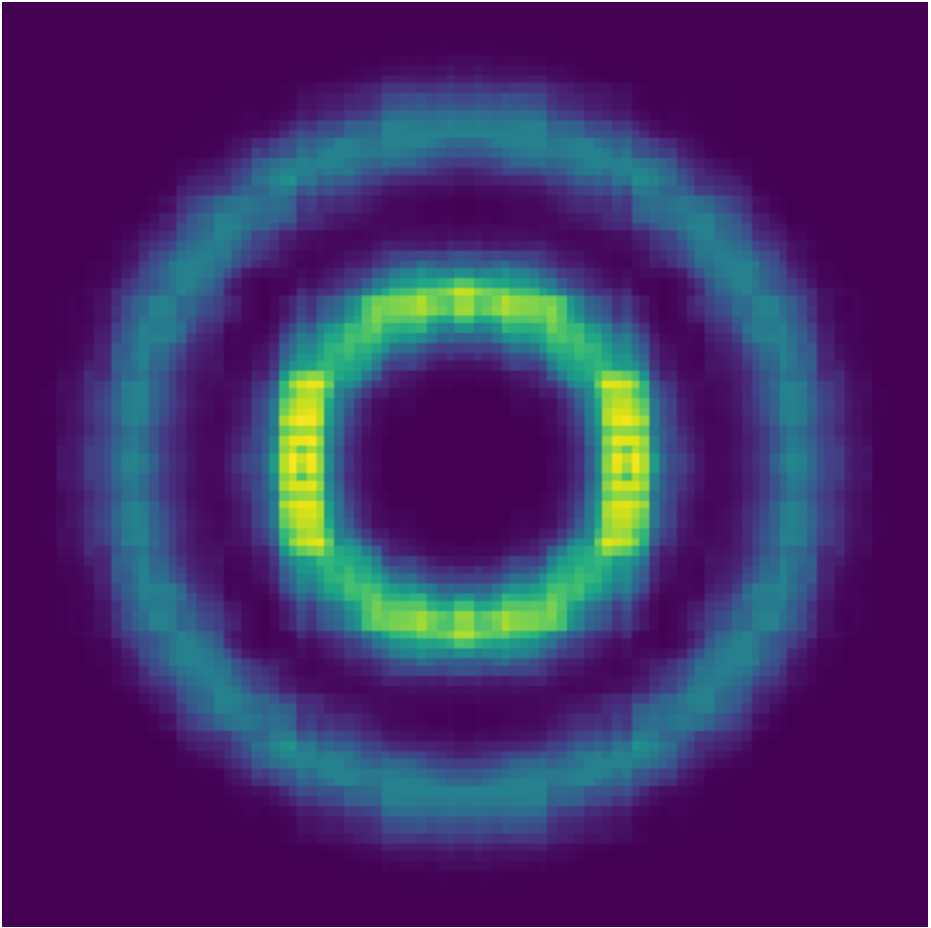} &
	\includegraphics[width=0.14\textwidth]{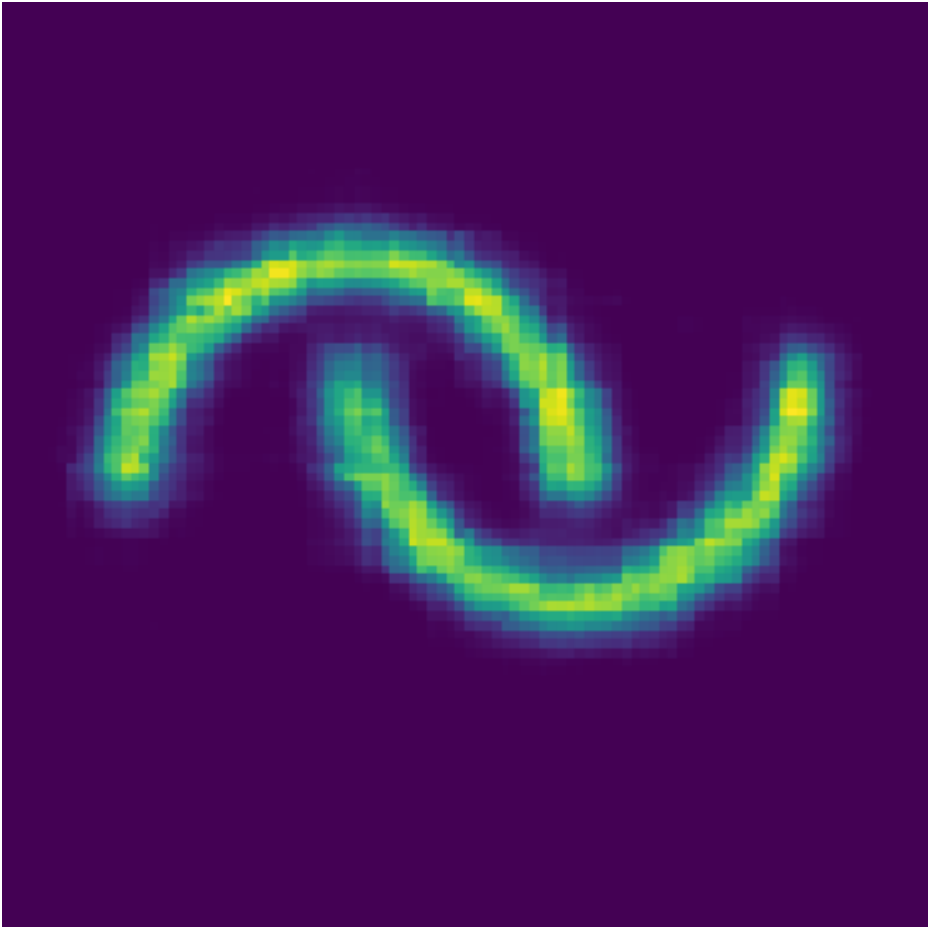} &
	\includegraphics[width=0.14\textwidth]{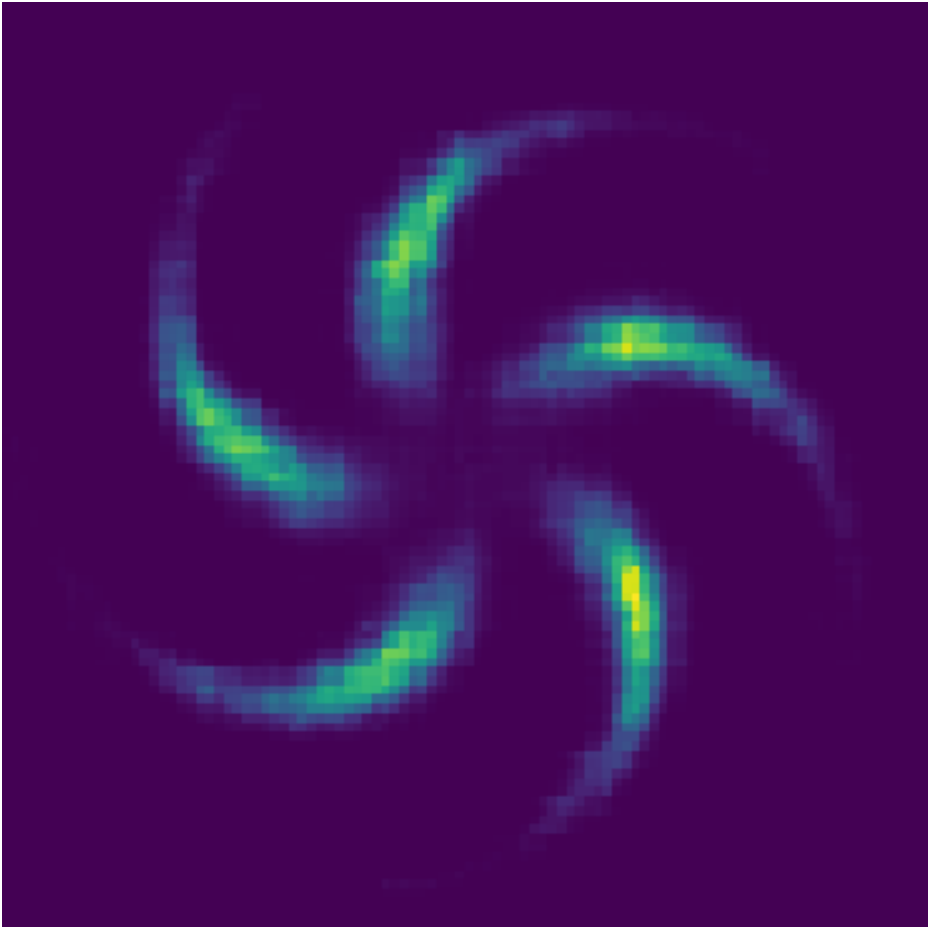} &
	\includegraphics[width=0.14\textwidth]{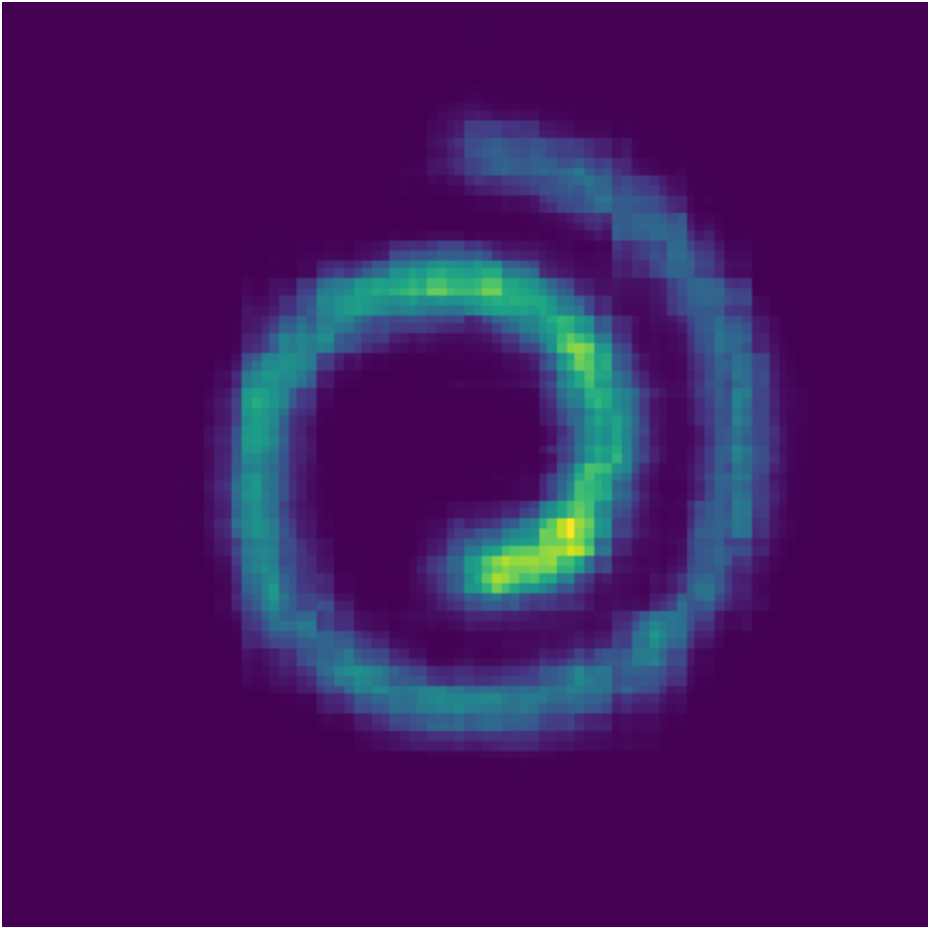} 
	\\
	\multicolumn{7}{c}{ADE*} \\
	\includegraphics[width=0.14\textwidth]{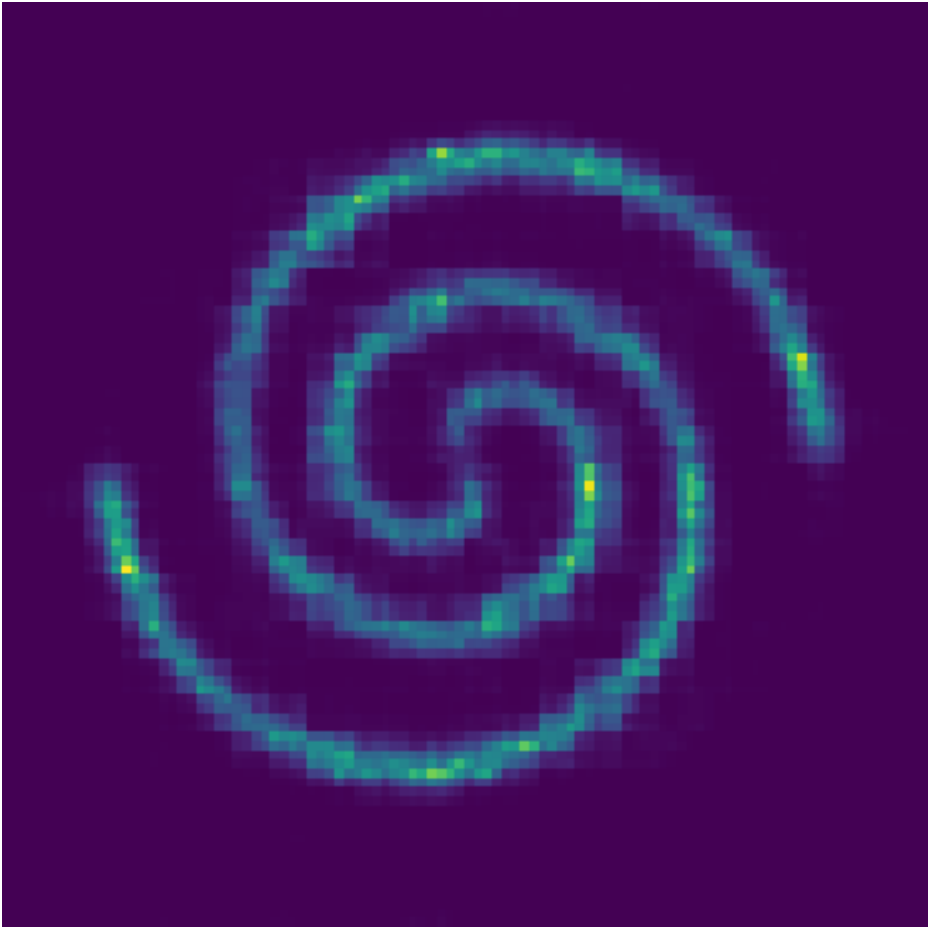} &
	\includegraphics[width=0.14\textwidth]{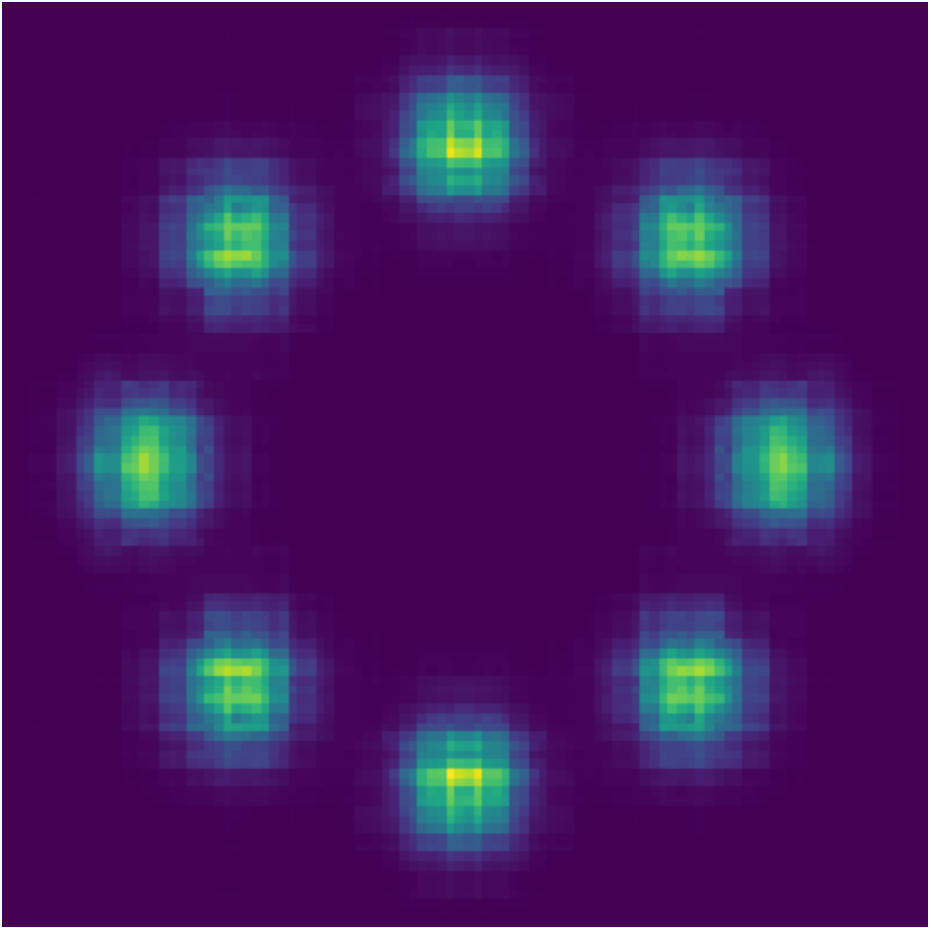} &
	\includegraphics[width=0.14\textwidth]{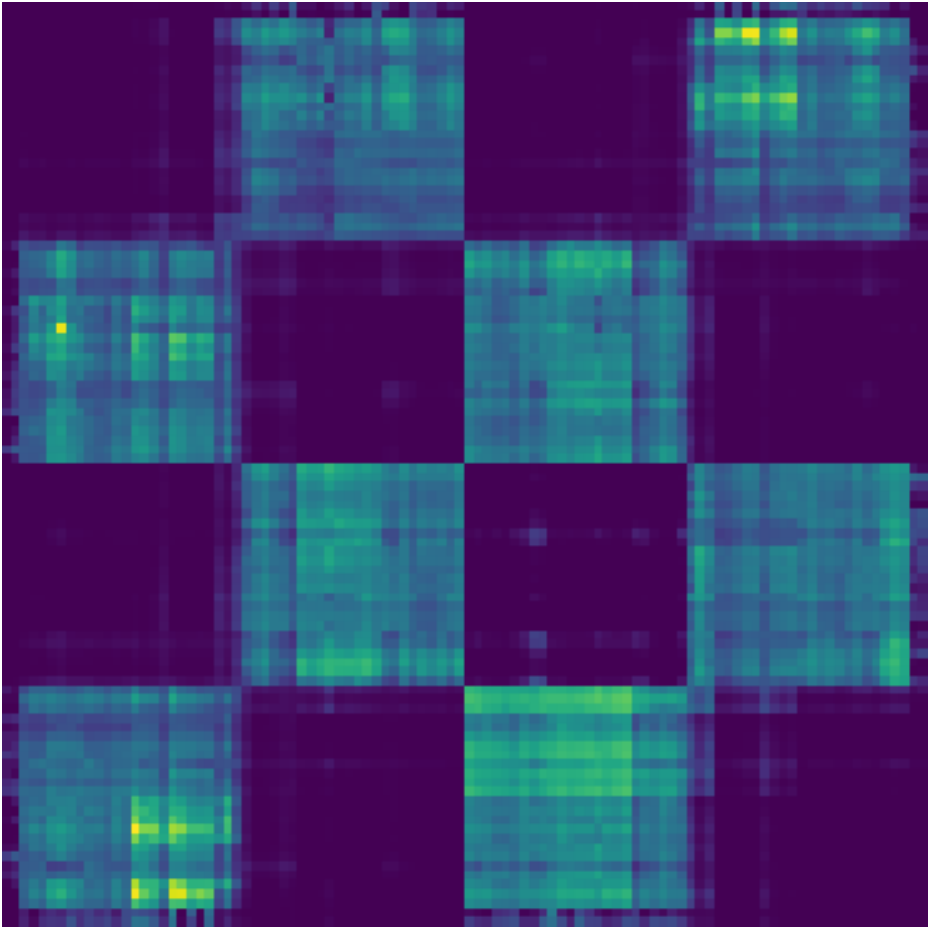} &
	\includegraphics[width=0.14\textwidth]{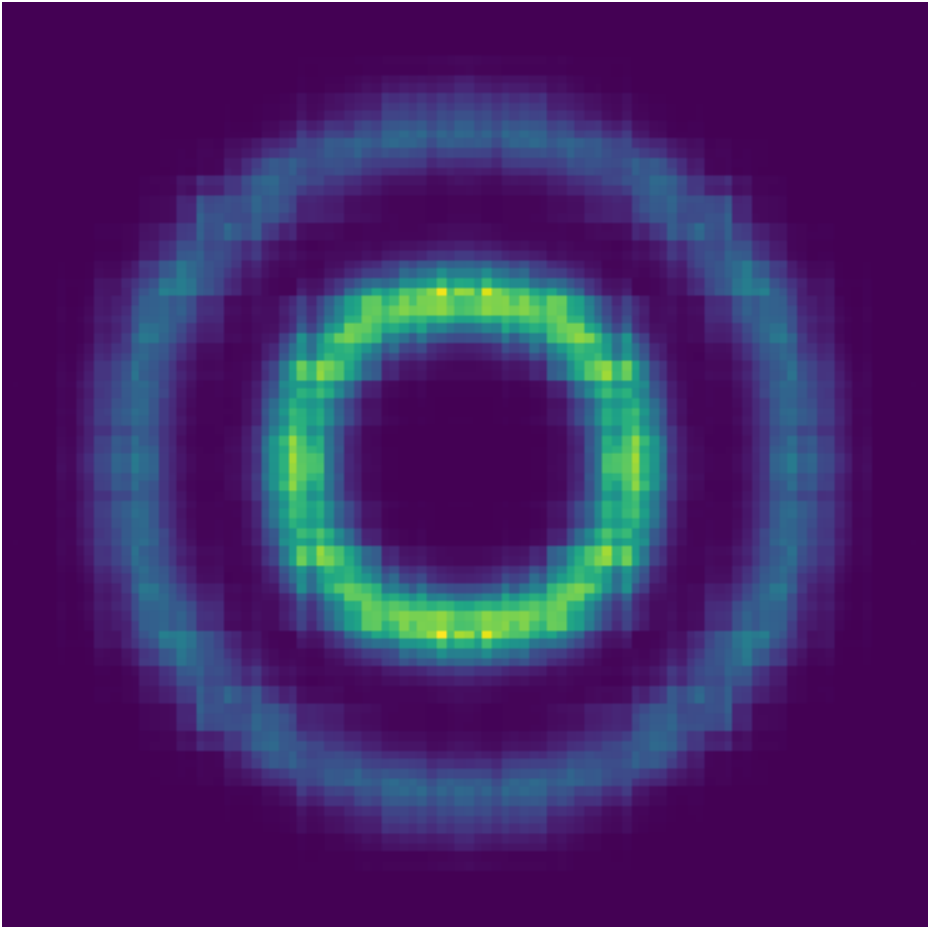} &
	\includegraphics[width=0.14\textwidth]{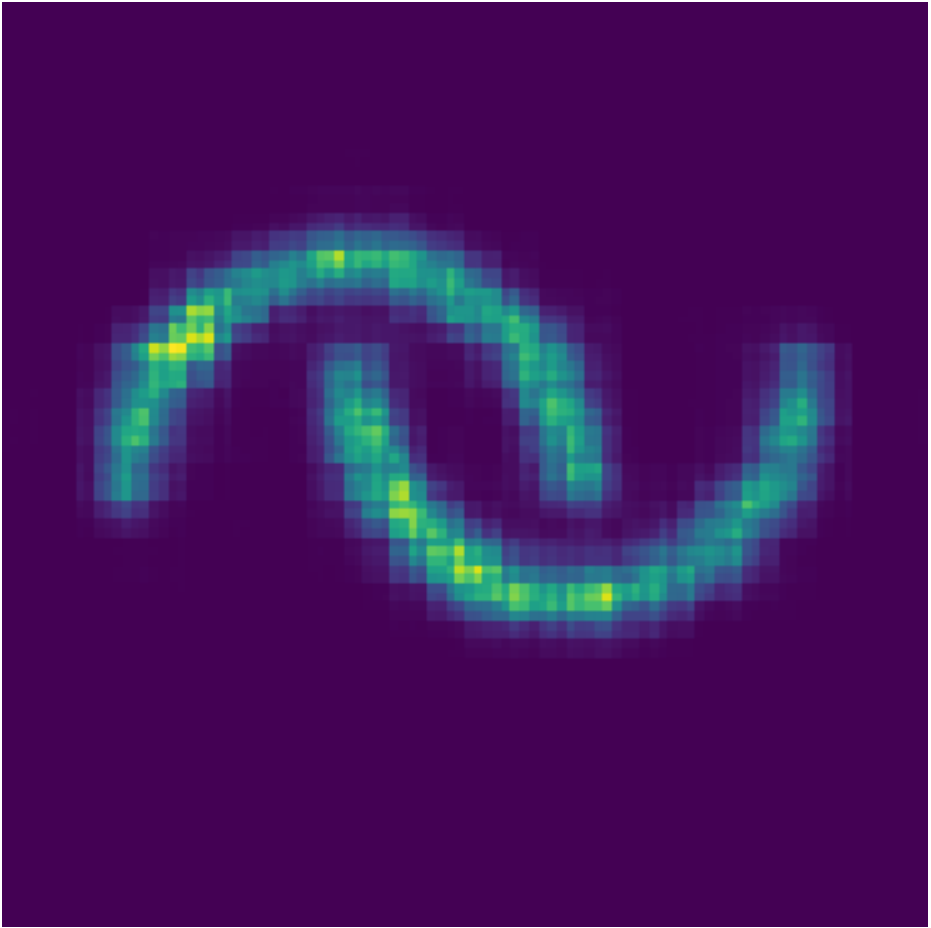} &
	\includegraphics[width=0.14\textwidth]{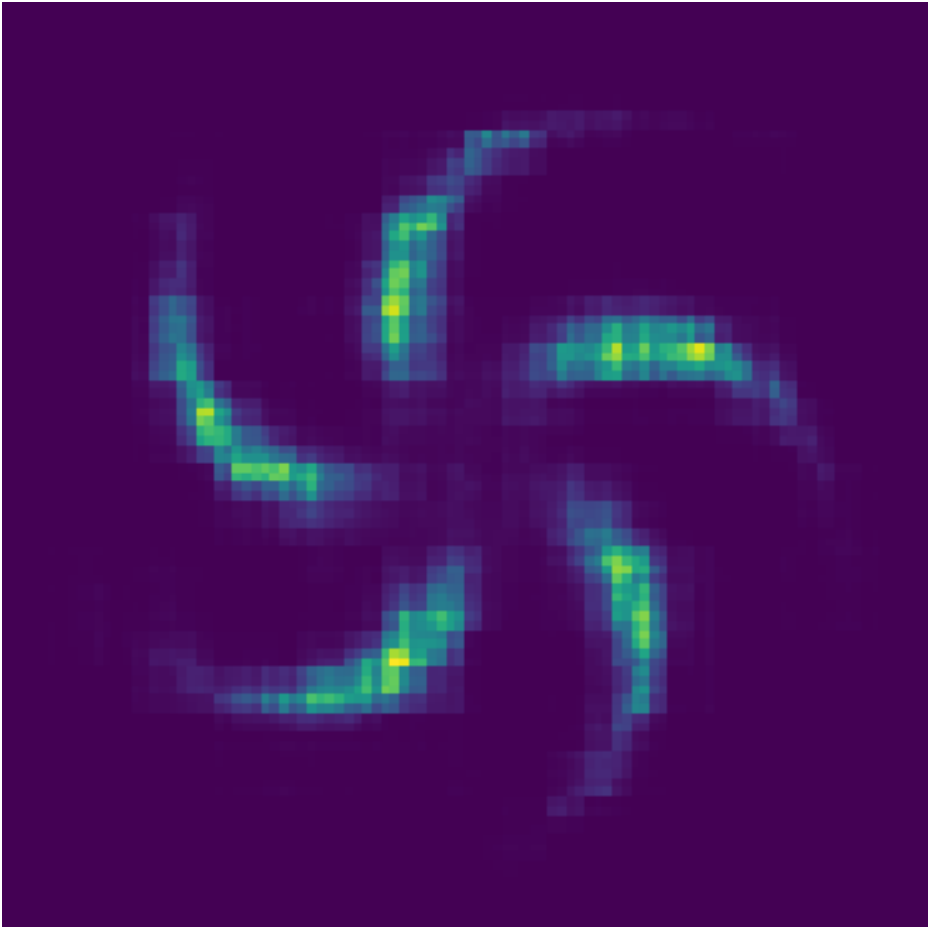} &
	\includegraphics[width=0.14\textwidth]{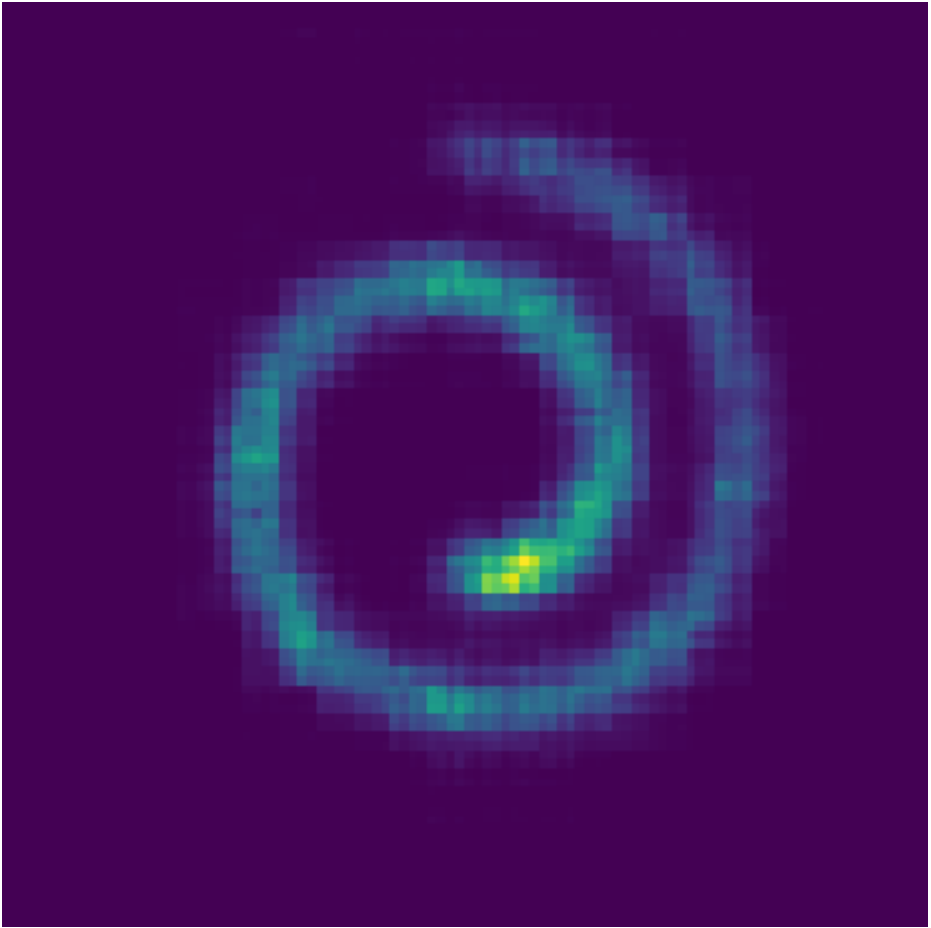}
	\\
	\multicolumn{7}{c}{\modelshort{}} \\
	\includegraphics[width=0.14\textwidth]{figs/2spirals-heat} & 
	\includegraphics[width=0.14\textwidth]{figs/8gaussians-heat} & 
	\includegraphics[width=0.14\textwidth]{figs/checkerboard-heat} & 
	\includegraphics[width=0.14\textwidth]{figs/circles-heat} & 
	\includegraphics[width=0.14\textwidth]{figs/moons-heat} & 
	\includegraphics[width=0.14\textwidth]{figs/pinwheel-heat} &
	\includegraphics[width=0.14\textwidth]{figs/swissroll-heat}
	\\
	2spirals & 8gaussians & checkerboard & circles & moons & pinwheel & swissroll \\
\end{tabular}
\caption{visualization of learned discrete EBMs using different methods. \label{fig:baseline_vis}}
\end{figure*}

In \figref{fig:sample_gt} we also visualize the samples obtained from the ground truth distribution and visualize them in 2D space. Compared to \figref{fig:synthetic_vis} we can see our learned sampler can almost perfectly recover the true distribution. The checkerboard seems to be the most difficult one among these datasets, as for both PCD and ADE baselines the learned model is much worse than the one learned by \modelshort{}. We find that in this case the distribution is not smooth as it has sharp boundaries for each ``square'' in the distribution. Thus below we study how the learned sampler behaves for ADE algorithm in this case. 
\begin{wrapfigure}{r}{0.43\textwidth}
\centering
\vspace{-2mm}
\begin{tabular}{cc}
	\includegraphics[width=0.2\textwidth]{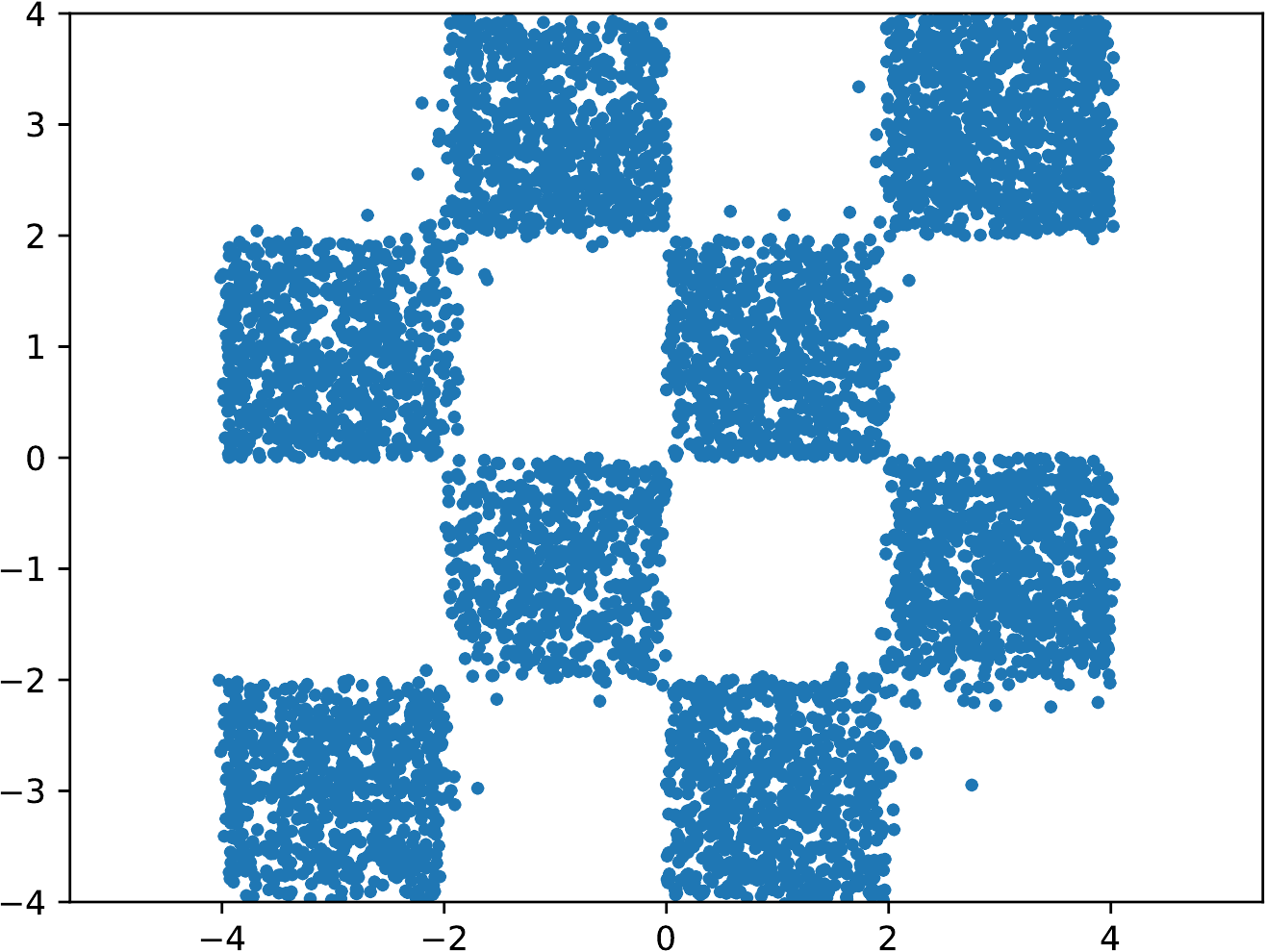} & 
	\includegraphics[width=0.2\textwidth]{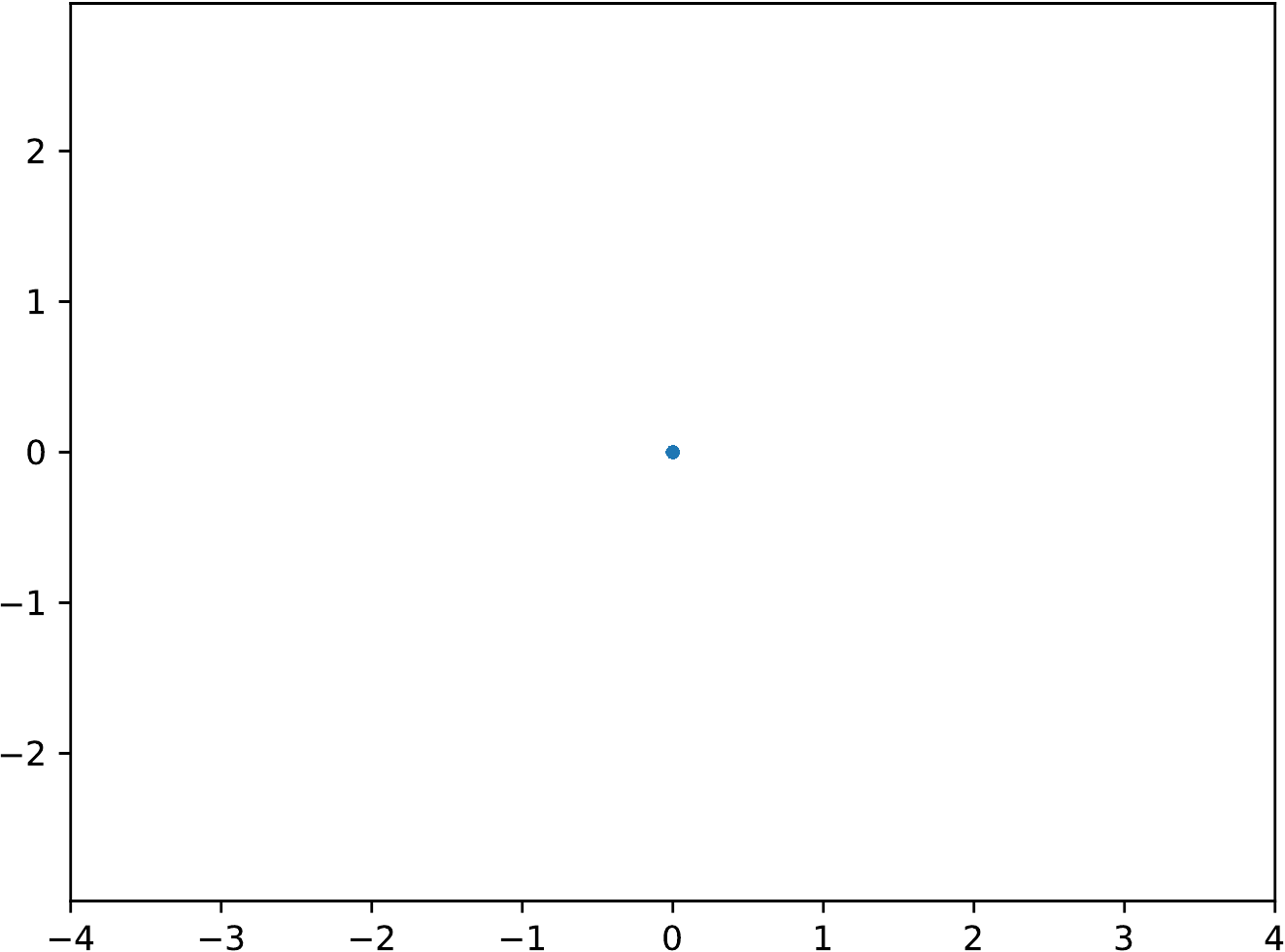} \\
	$q_0= $MLP & $q_0= $RNN
\end{tabular}
\caption{ADE with different samplers. \label{fig:ade_sampler}}
\vspace{-2mm}
\end{wrapfigure}
In \figref{fig:vis_q} in main paper we have studied \modelshort{} with different design choices of $q_0$, where a weak $q_0$ like fully factored distribution can still get reasonable results. Instead in \figref{fig:ade_sampler} we can see that, for ADE, different parameterizations of the sampler will make quite different behaviors. The MLP sampler is an autoregressive one with non-sharing parameters, while the RNN sampler has the shared parameters across different steps. This clearly shows the limitation of autoregressive model with parameter sharing, and also the necessity of learning sampler with local search to improve the weak initial sampler $q_0$.

\paragraph{Implementation details}

Here we provide more details on the instantiation of \modelshort{} on the synthetic tasks. Below we first cover the parameterization details. 

The energy function is an MLP with dimensions of $[32, 256, 256, 256, 1]$, where 32 is the input size, and $256$ is the hidden layer size. We use ELU as the activation function. 

For ADE and \modelshort{}, the $q0$ is parameterized with either autoregressive model or a factorized model. For the factorized model, we simply learn 32-dimensional vector that represents the logits of each dimension independently. For the autoregressive model, there can be two choices. The first one uses LSTM (which we denote as \texttt{RnnSampler}) to encode the bits, where LSTM has hidden size of 256 and 1 layer. All the dimensions share the same predictor that predicts the binary bit from the latent embedding obtained by LSTM. The predictor is an MLP with size $[256, 512, 2]$ with ELU activation. Another alternative is to use MLP to encode the bits, as we know the maximum length is 32 beforehand (which is not practical in general). This way we encode the history using 31 MLPs, where the $i$-th MLP has size $[i, 512, 512, 256]$ that embeds the prefix of length $i$, and use the shared predictor to predict the bit at current position. 

\modelshort{} has additional components, which are editor $q_A(\cdot|\cdot)$ and stop policy $q_{\text{stop}}$. The editor only needs to predict the location for modification, as once the location is given one can simply flip that bit. It is parameterized into $[32, 512, 512, 32]$ with ELU as activation function and softmax at the end. The stop policy is parameterized by an MLP with layers $[32, 512, 512, 1]$ with ELU activation and sigmoid in the last output. 

We use the \texttt{Inverse proposal} where $A'(\cdot|\cdot)$ is a uniform distribution that samples a random location for modification. To avoid sampling the same position twice, we first permute the locations and then pick the first $k$ locations as the proposal trajectory, where $k$ is the number of edits that is sampled from a geometric distribution, with the truncation at 16.

\subsection{Program synthesis experiments}
\label{app:exp_robustfill}

\noindent\textbf{Grammar:} We use the following grammar for RobustFill programs.

{
\small
\grammarindent24ex
\grammarparsep0.5ex
\begin{grammar}
<program> $\rightarrow$ <ExprList>

<ExprList> $\rightarrow$ <expr> | <expr> <ExprList>

<expr> $\rightarrow$ `ConstStr' <ConstExpr> | `SubStr' <SubstrExpr>

<ConstExpr> $\rightarrow$ `]' | `,' | `-' | `.' | `@' | `'' | `"' | `(' | `)' | `:' | `\%'

<SubstrExpr> $\rightarrow$ <Pos> <Pos>

<Pos> $\rightarrow$ <ConstPos> | <RegPos>

<ConstPos> $\rightarrow$ -4 | -3 | -2 | -1 | 0 | 1 | 2 | 3 | 4

<RegPos> $\rightarrow$ <ConstTok> | <RegexTok>

<ConstTok> $\rightarrow$ <ConstExpr> <p2> <direct>

<RegexTok> $\rightarrow$ <RegexStr> <p2> <direct>

<p2> $\rightarrow$ <ConstPos>

<direct> $\rightarrow$ `Start' | `End'

<RegexStr> $\rightarrow$ `[A-Z]([a-z])+' | `[A-Z]+' | `[a-z]+' | `\\d+' | `[a-zA-Z]+' | `[a-zA-Z0-9]+' | `\\s+' | `^' | `\$'

\end{grammar}
}

\noindent\textbf{Data generator:} We use following configurations for generating synthetic data for program synthesis:
\begin{itemize}
	\item The maximum number of types of tokens in input strings is set to 5. 
	\item The maximum length of input strings is 20.
	\item The maximum length of output strings is 50.
	\item The total number of input-output examples per synthesis task is 10.
	\item The number of public input-output example pairs is 4.
	\item The number of private input-output example pairs is 6.
\end{itemize}

The learned synthesizer uses the 4 public IO pairs for synthesize the program, and evaluate against all 10 IO pairs. It is considered correct if it is consistent with these 10 IO pairs. 

\noindent\textbf{Parameterization:} We use a 3-layer LSTM with hidden size of 256 to encode each input and output sequences, respectively. Then each IO pair is represented by concatenating the sequence embeddings of input and output strings. The set of inputs is obtained by max-pooling over the IO-pair embeddings, which will be served as the context for program synthesis. 

For $q_0$ we use a 3-layer LSTM with hidden size of 256 for predicting program tokens. For~\modelshort{} we parameterize the $q_A$ with two components: the position predictor $q_{\M{pos}}$ and the modified expression $q_{\M{expr}}$. $q_{\M{pos}}$ embeds the current program using 3-layer bidirectional LSTM, and predict the position using pointer mechanism~\citep{vinyals2015pointer}. Note that the selected position must be the start or end of an existing $<expr>$ in above grammar, which indicates whether we want to modify or insert a new $<expr>$ in this position. $q_{\M{expr}}$ predicts the new expression using another 3-layer LSTM, and is allowed to make empty prediction (which corresponds to delete an expression in current program). As the program heavily relies on the context free grammar to make it valid, we utilize the technique in grammarVAE~\citep{kusner2017grammar} to mask out invalid production rules during program generation.

\subsection{Fuzzing experiment}

\begin{table*}[h]
\centering
\begin{tabular}{cccc}
	\toprule
	Software & \# seed files & file size (bytes) & \# training samples for \modelshort{} \\
	\hline
	libpng & 170 & 104 - 12,901 & 146,507\\
	openjpeg & 36 & 233 - 7,885,684 & 27,572,688\\
	libmpeg2 & 131 & 10,581 - 50,000 & 6,119,237 \\
	\bottomrule
\end{tabular}
\caption{Data statistics for generative fuzzing experiments. We use window size 64 for \modelshort{} to obtain chunks of data from the raw byte streams. \label{tab:fuzz_data}}
\end{table*}

\noindent\textbf{Data statistics:}  We test different approaches against three target softwares. The OSS-Fuzz project comes with different set of seed inputs for different target softwares. These inputs are served as training samples for both \modelshort{} and \citet{godefroid2017learn}, and will be used as seed inputs for \texttt{libFuzzer} as well. \tabref{tab:fuzz_data} displays the data statistics. Note that \modelshort{} trains a conditional EBM with chunked data from the original raw byte streams, in order to handle huge files. We use chunk size 64 by default. Thus for a file with size $L$ where $L \geq 64$, there will be $L - 64 + 1$ training samples for \modelshort{}. 

\noindent\textbf{Parameterization:} We use a three-layer MLP to parameterize the energy function, where for the input layer, we use embedding size equals to 4 for the byte string. For the negative sampler, we parameterize $q_0$ with LSTM.  $q_A$ consists of two parts, namely $q_{\M{pos}}$ which predicts which position to modify using an MLP, and $q_{\M{value}}$ which predicts a new value for that position using another MLP. We use Eq~\eqref{eq:inv_proposal} for training such EBM. 

\end{appendix}

\end{document}